%% file: icdm_2026.tex
\documentclass[conference]{IEEEtran}
\IEEEoverridecommandlockouts

\usepackage[numbers,sort&compress]{natbib}

\usepackage[utf8]{inputenc} % allow utf-8 input
\usepackage[T1]{fontenc}    % use 8-bit T1 fonts
\usepackage{hyperref}       % hyperlinks
\usepackage{url}            % simple URL typesetting
\usepackage{booktabs}       % professional-quality tables
\usepackage{amsfonts}       % blackboard math symbols
\usepackage{nicefrac}       % compact symbols for 1/2, etc.
\usepackage{microtype}      % microtypography
\usepackage{xcolor}         % colors
\usepackage{tikz}           % circled table cells (avg.\ rank)
\newcommand{\circledavg}[1]{%
  \tikz[baseline=(n.base)]{%
    \node[draw,circle,inner sep=0.5pt,minimum size=1.45em](n){\centering #1};%
  }%
}
\usepackage{amsmath}
\usepackage{amsthm}
\usepackage[ruled,vlined]{algorithm2e}
\usepackage{graphicx}
\usepackage[font=footnotesize]{subcaption}
\usepackage{tikz-cd}
\usepackage{booktabs}
\usepackage{wasysym}

\usepackage[table]{xcolor}

\usepackage{pifont} % for symbols

\newcommand{\poscell}{\cellcolor{green!20}\ding{51}} % checkmark
\newcommand{\midcell}{\cellcolor{yellow!30}\ding{51}\ding{55}} % neutral / partial
\newcommand{\negcell}{\cellcolor{red!20}\ding{55}} % cross

\makeatletter
\newcommand{\linebreakand}{%
  \end{@IEEEauthorhalign}
  \hfill\mbox{}\par
  \mbox{}\hfill\begin{@IEEEauthorhalign}}
\makeatother

\newtheorem{proposition}{Proposition}
\newtheorem{theorem}{Theorem}

\DeclareMathOperator*{\argmin}{argmin}
\DeclareMathOperator*{\argmax}{argmax}
\newcommand{\KL}[2]{\operatorname{KL}\!\left(#1 \,\middle\|\, #2\right)}
\newcommand{\xobs}{x_m}
\newcommand{\xmis}{\overline{x}_m}
\newcommand{\ourname}{Impute-EM}

\title{\ourname: Native Mixed-State Diffusion Models for Heterogeneous Data Imputation}

\author{\IEEEauthorblockN{Sergei Kholkin}
\IEEEauthorblockA{Applied AI Institute}
\and
\IEEEauthorblockN{Kirill Sokolov}
\IEEEauthorblockA{Applied AI Institute, Moscow State University}
\linebreakand
\IEEEauthorblockN{Dmitry Baranchuk}
\IEEEauthorblockA{Yandex Research}
\and
\IEEEauthorblockN{Evgeny Burnaev}
\IEEEauthorblockA{Applied AI Institute, AXXX}
\and
\IEEEauthorblockN{Alexander Korotin}
\IEEEauthorblockA{Applied AI Institute, AXXX}}

\begin{document}

\maketitle

\begin{abstract}
Missing values are ubiquitous in heterogeneous data mining, where numerical, categorical, and binary variables often coexist. Many imputation methods, especially diffusion-based ones, treat discrete variables through continuous surrogates such as one-hot relaxations rather than modeling them natively. This creates a mismatch between the model state space and the mixed discrete and continuous structure of the data. We propose \ourname, an Expectation Maximization style framework that alternates between imputing missing entries with the current model and refitting a diffusion backbone on completed data. We instantiate \ourname\ with native mixed-state diffusion backbones for heterogeneous data, combining Gaussian and masked categorical components without one-hot relaxations. In exact settings, we characterize the update and show that the observed mask-indexed marginals match the targets at the limit, while making explicit that the full data distribution is generally non-identifiable from incomplete observations alone. Empirically, \ourname\ delivers the best distributional fidelity on mixed-type tabular imputation, on which downstream modeling relies, with text imputation serving as a controlled validation of the native discrete backbone.
\end{abstract}

\vspace{-2mm}
\section{Introduction}
\vspace{-2mm}

Missing data is ubiquitous, and reliable imputation is a central step in data-driven systems: imputed tables are often reused for downstream modeling and analysis, so preserving the marginal and dependency structure of the data matters alongside recovering individual entries. This is especially important for heterogeneous data, where numerical, categorical, and binary variables often coexist across tabular data \citep{du2024remasker}, audio data \citep{dror2025token}, and graph data \citep{you2020handling}.

% \textcolor{red}{explain somewhere here the difference between imputatio (learning from missig entries) and learning from full data and imputing values}.

Prior work on data imputation spans classical statistical methods, conventional machine learning, and modern deep models. Early approaches include nearest-neighbor imputation \cite{pujianto2019knn} and parametric density models such as Gaussian mixtures \citep{garcia2010pattern}. Recent predictive methods learn missing entries from observed entries using masking or graph-based structure \citep{du2024remasker,you2020handling,zhong2023igrm}. Generative methods instead model the joint distribution of observed and missing entries and impute by conditional sampling \citep{yoon2018gain,mattei2019miwae,richardson2020mcflow,ouyang2023missdiff,zheng2022tabcsdi}. These methods are attractive for heterogeneous imputation because they can represent uncertainty over plausible completions rather than returning only a single deterministic fill-in.

Despite the success of modern generative models, including diffusion models \citep{ho2020denoising}, training generative imputers from incomplete data remains challenging. A common approach is to optimize incomplete-data objectives that evaluate likelihood or reconstruction losses only on observed entries \citep{ouyang2023missdiff,mattei2019miwae}. These objectives are practical, but they do not directly specify how the unobserved coordinates should be completed during training. The \textbf{Expectation Maximization algorithm} provides a natural alternative by alternating between imputing missing values under the current model and updating the model on the resulting completed data \citep{diffputer,dempster1977em,yu2025missing_miri,yoon2018gain}. However, its use with native discrete and mixed-state diffusion backbones remains underexplored.

% \textbf{rarely provides} optimized objectives, \textbf{theoretical guarantees}, or fixed point characterization.

Moreover, recent diffusion-based imputation methods mostly rely on continuous diffusion models \citep{ouyang2023missdiff,diffputer}, often representing categorical variables as one-hot vectors or continuous relaxations. This creates a mismatch between the model state space and the native structure of discrete variables, and it can become inefficient for high-dimensional categorical data \citep{sahoo2024mdlm}.
% In that regard the natural state space specific data imputation problem solutions on discrete and combinations of discrete and continuous random variables, which we call general state spaces, remain underexplored.
Native discrete diffusion models and mixed continuous--categorical diffusion models are therefore a natural fit for heterogeneous imputation, but their use in this setting remains limited \citep{du2024remasker,dror2025token,you2020handling}. 

% The core challenge is the incomplete likelihood. A generative model trained directly on incomplete samples must estimate a full data distribution from partially observed values, and its samples during the generation process are not guaranteed to agree with the observed entries due to training by the utilization of the incomplete likelihood. 

% The core challenge is the incomplete likelihood: learning a generative model requires the full data distribution, while the missing coordinates are precisely the unknown quantities. The classical EM algorithm provides a principled route for such problems by alternating between imputing missing values and updating the model \citep{dempster1977em}. However, combining EM with powerful diffusion models is still challenging, since diffusion models naturally generate complete samples and conditional inference over missing entries is not direct.

% In this work, we revisit EM style iterative imputation through a principled objective that uses all information available in the training dataset and yields theoretical guarantees for imputation quality. We further use general state space diffusion models as generative backbones for \ourname, enabling imputation in continuous, discrete, and mixed state spaces.

% In this work, we revisit EM style iterative imputation through the lens of general state space diffusion models. Our goal is to use the expressive power of diffusion models while preserving a principled iterative procedure that can handle continuous, discrete, and mixed data.

Our contributions are:

\begin{itemize}
    \item We formulate missing-data learning as matching mask-indexed observed marginals and derive the corresponding EM-style update, see Section \ref{sec:impute-em-main}. The formulation makes explicit that the full data distribution is generally non-identifiable from incomplete observations alone.
    \item We instantiate the EM-style update with native mixed-state diffusion backbones for heterogeneous data, combining Gaussian and masked categorical components without one-hot relaxations, see Section \ref{sec:methods-practical-impl}.
    \item We evaluate \ourname\ on mixed-type tabular imputation as the main experiment, with text imputation as a controlled validation of the native discrete backbone in isolation, and study how EM-style refinement affects diffusion-based imputation quality.
\end{itemize}

\paragraph{Notation}
Let \(\mathcal{X}=\mathbb{R}^{d_1}\times\mathcal{S}\), with \(\mathcal{S}=\prod_{\ell=1}^{d_2}\mathcal{S}_\ell\), be a mixed continuous-discrete data space, where \(\mathbb{R}^{d_1}\) contains continuous coordinates and \(\mathcal{S}\) collects the discrete coordinates. Each \(\mathcal{S}_\ell\) is a finite discrete set, such as a categorical or binary variable. We write \(D=d_1+d_2\) for the total number of variables and denote a sample by \(x=(x^{(1)},\ldots,x^{(D)})\in\mathcal{X}\). The set of probability distributions on \(\mathcal{X}\) is denoted by \(\mathcal{P}(\mathcal{X})\). A mask is a binary vector \(m\in\{0,1\}^{D}\), where \(m_j=1\) indicates that coordinate \(j\) is observed and \(m_j=0\) indicates that it is missing. We write \(x_m=\{x^{(j)}:m_j=1\}\) for observed coordinates and \(\overline{x}_m=\{x^{(j)}:m_j=0\}\) for missing coordinates. For a distribution \(p\in\mathcal{P}(\mathcal{X})\), \(p_m\) denotes its marginal distribution on the observed coordinates indexed by \(m\).

% a $D$-dimensional space over $\mathbb{S}$, where $\mathbb{S}=\{0,1,2,\dots\}$, i.e., $\mathcal{X}=\mathbb{S}^D$. Elements $x\in\mathcal{X}$ are $D$-dimensional vectors, written as $x=(x^1,\dots,x^d,\dots,x^D)$. With a slight abuse of notation, we also use $x$ to denote the corresponding random variable with distribution $p(x)\in\mathcal{P}(\mathcal{X})$. We write $p(x)$ for a generic distribution over $\mathcal{X}$ and use text subscripts to specify the distribution when needed, e.g., $p_{\mathrm{model}}(x)$ for the model distribution and $p_{\mathrm{data}}(x)$ for the data distribution. We use $q$ to denote variational distributions. 
% We denote by $\mathcal{S}$ the set of singleton orders over $\{1,\dots,D\}$, i.e., ordered partitions $\pi=(\pi_1,\dots,\pi_D)$ in which each block $\pi_d$ contains exactly one coordinate. We denote by $\mathcal{B}$ the set of block orders, i.e., sequences $\pi=(\pi_1,\dots,\pi_T)$ where each $\pi_t\subseteq\{1,\dots,D\}$ may be empty, the non-empty blocks are pairwise disjoint, and $\bigcup_{t=1}^T \pi_t=\{1,\dots,D\}$.

\vspace{-2mm}
\section{Background}
\vspace{-2mm}

In this section we do define the learning from missing data problem and describe the methodology of diffusion models, which are our algorithms backbone generative model.

\subsection{Problem Setup}\label{sec:problem-setup}

% Consider data distribution $p^*(x) \in \mathcal{P}(\mathcal{X})$ and let $x = (x^{(1)}, \dots, x^{(D)}) \in \mathcal{X}$ denote a clean sample. In our case, we do not observe full clean data, instead, we observe samples covered by \textbf{observation masks}. An observation mask is a binary vector $m \in \{0,1\}^D$, where $m_j=1$ indicates that coordinate $j$ is observed and $m_j=0$ indicates that coordinate $j$ is missing. The mask induces indexed subvectors of $x$ (observed and missing parts):

Let $p^*(x) \in \mathcal{P}(\mathcal{X})$ denote the data distribution and let $x \in \mathcal{X}$ be a clean sample. In our setting, full samples are not observed directly. Instead, observations are partially revealed through \textbf{observation masks}. Such a mask is a binary vector $m \in \{0,1\}^D$, where $m_j = 1$ means that coordinate $j$ is observed and $m_j = 0$ means that it is missing. The mask partitions $x$ into observed and missing subvectors:

\vspace{-4mm}
\begin{equation}
  \label{eq:xm-def}
  \xobs := \{x^{(j)} : m_j=1\},
  \qquad
  \xmis := \{x^{(j)} : m_j=0\}.
  \end{equation}

  \vspace{-2mm}

% Then we restrict the full set of possible masks to
% \[
% \mathcal{M} \subset \{0,1\}^D,
% \]
We model masks as random variables, with

\vspace{-4mm}
\begin{equation}
    \mu(m) \in \mathcal{P}(\mathcal{M}), \quad \mathcal{M} \subseteq \{0, 1\}^D
\end{equation}

\vspace{-2mm}

% which mirrors real-life scenarios, where the observation pattern is stochastic, i.e., values are stochastically getting missing. The observed data $p_m(x_m)$ is constructed by taking the samples from $p^*(x)$ and drop the indices determined by the mask sampled from  $\mu(m)$:

which reflects real-world settings where missingness is stochastic. The observed distribution is the mask-induced marginal:

\vspace{-5mm}
\begin{equation}
p_m(x_m) = \int p^*(x_m, \overline{x}_m)\, d\overline{x}_m, \quad x \sim p^*,\ m \sim \mu.
\end{equation}
\vspace{-3mm}

When clean sample entries are being dropped we inherently lose information about the clean data distribution $p^*(x)$ and do have only the access to its partial marginals $p^*_m(x_m)$, which in general are insufficient to uniquely identify the $p^*$ distribution \cite{yoon2018gain}. The goal of the data imputation problem is to \textbf{learn a conditional imputer} that samples plausible missing values $\overline{x}_m$ given observed values $x_m$, while matching the observed mask-indexed marginals available in the incomplete data. Two empirical setups are commonly considered: \textit{in-sample} imputation, where the model imputes the training data, and \textit{out-of-sample} imputation, where it is applied to unseen data. 
% In this work, we mainly focus on the in-sample setting.
% Although the data without missing entries can appear in the dataset, the amount of it is considered small or insufficient.

% \textbf{recover either $p^*(x)$ or other joint distribution that follows the observed marginals $p^*_m(x_m)$ data} or equivalently to build a generative model to sample from $p^*(x)$ or other joint distribution. 

% \textcolor{red}{tell about empirical version and remove assumptio that we do not have zeros masks. Tell that in practice small numner of clean sample sstill doesn't allow for full recovery.}

\vspace{-2mm}
\subsection{Diffusion models}\label{sec:diff_background}
\vspace{-2mm}

Diffusion models learn a data-generating reverse process by inverting a prescribed forward corruption \citep{ho2020denoising,song2021sde}. The construction depends on the state space: continuous diffusion adds Gaussian noise for real-valued features \citep{ho2020denoising,song2021sde}; masked discrete diffusion operates natively on categorical variables \citep{austin2021structured,sahoo2024mdlm}; mixed-state diffusion combines both for heterogeneous data \citep{tabdiff}. For imputation, this matters because observed coordinates should stay in their native domains while missing coordinates are resampled conditionally. Below, we describe the DDPM \cite{ho2020denoising} and MDM \cite{austin2021structured} formulations adopted in our work.

\paragraph{Training}

For continuous data, such as real-valued tabular features, diffusion models such as DDPM \cite{ho2020denoising} or Score SDE \cite{song2021sde} define a forward process that gradually adds Gaussian noise to a clean sample, from clean data $x_0$ to nearly pure noise $x_T$:

\vspace{-5mm}
\begin{equation}
x_t = \sqrt{\bar\alpha_t} x_0 + \sqrt{1-\bar\alpha_t}\epsilon,
\qquad
\epsilon \sim \mathcal{N}(0,I).
\end{equation}

\vspace{-2mm}

The reverse transitions are learned through the DDPM training objective, which is commonly written as the noise-prediction loss:

\vspace{-5mm}
\begin{equation}
\mathcal{L}_{\rm ddpm}(\theta)
=
\mathbb{E}_{x_0,t,\epsilon}
\left[
\left\|\epsilon-\epsilon_\theta(x_t,t)\right\|^2
\right].
\end{equation}

\vspace{-2mm}

For discrete data, such as text, categorical variables, or tokens, Gaussian noise is not naturally defined. Instead, masked diffusion models corrupt samples through masked transitions:

\vspace{-5mm}
\begin{equation}
\begin{aligned}
q_{\rm MDM}(x_t=a\mid x_0)
={}& \alpha_t \mathbf{1}\{a=x_0\} + (1-\alpha_t)\mathbf{1}\{a=M\},
\end{aligned}
\end{equation}

\vspace{-2mm}

where $a \in \mathcal{S}\cup\{M\}$ and \(M\) is a special mask token. The reverse transitions are trained by maximizing the probability assigned to the original discrete value:

\vspace{-5mm}
\begin{equation}
\mathcal{L}_{\rm mdm}(\theta)
=
\mathbb{E}_{x_0,t,x_t}
\left[
  -\log \hat p_\theta(x_0 \mid x_t)
\right],
x_t \sim q_{\rm MDM}(x_t\mid x_0). \nonumber
\end{equation}
\vspace{-4mm}

This formulation keeps discrete variables finite-valued throughout the diffusion process.

% The learned reverse process then iteratively predicts or resamples masked entries conditioned on the current corrupted state:
% \[
% p_{\theta}(x_{t-1}\mid x_t)
% =
% \mathrm{Cat}\!\left(
% x_{t-1};
% \frac{(1-\alpha_{t-1})M+(\alpha_{t-1}-\alpha_t)p_{\theta}(x_0\mid x_t)}
% {1-\alpha_t}
% \right).
% \]

\paragraph{Generation process}

In the continuous case, the learned reverse process then estimates the denoising direction at each noise level, ultimately transforming pure noise into a realistic data point:

\vspace{-5mm}
\begin{equation}
\begin{aligned}
p_\theta(x_{t-1}\mid x_t)
&= \mathcal{N}\!\left(x_{t-1};\mu_\theta(x_t,t),\sigma_t^2 I\right),\\
\mu_\theta(x_t,t)
&= \frac{1}{\sqrt{\alpha_t}}
\left(x_t-\frac{\beta_t}{\sqrt{1-\bar\alpha_t}}\epsilon_\theta(x_t,t)\right).
\end{aligned}
\end{equation}
\vspace{-2mm}

In the discrete case, the learned reverse process iteratively predicts masked entries conditioned on the current corrupted state:

\vspace{-5mm}
\begin{equation}
\begin{aligned}
&p_{\theta}(x_{t-1}\mid x_t)\\
&\quad =
\mathrm{Cat}\!\left(
x_{t-1};
\frac{(1-\alpha_{t-1})M+(\alpha_{t-1}-\alpha_t)p_{\theta}(x_0\mid x_t)}
{1-\alpha_t}
\right).
\end{aligned} \nonumber
\end{equation}
\vspace{-2mm}

\begin{table*}[t]
  \centering
  \small
  \begin{tabular}{lccl}
  \toprule
  \textbf{Method} & \textbf{Backbone Model} & \textbf{Practical Discrete Modeling} & \textbf{Empirical focus} \\
  \midrule
  GAIN \cite{yoon2018gain}       & GAN & \poscell & Tabular imputation \\
  MIRI \cite{yu2025missing_miri} & Rectified Flow & \negcell & Tabular imputation \\
  MCFlow \cite{richardson2020mcflow} & Normalizing Flow & \midcell & Tabular imputation \\
  Diffputer \cite{diffputer} & Score Diffusion \cite{song2021sde} & \midcell & Tabular imputation \\
  DiffEM \cite{hosseintabar2025diffem} & Score Diffusion \cite{song2021sde} & \negcell & Image reconstruction \\
  Impute-EM (\textbf{ours}) & General Diffusion \cite{ho2020denoising, austin2021structured, tabdiff} & \poscell & Text \& mixed-type tabular \\
  \bottomrule
  \end{tabular}
  \vspace{1mm}
  \caption{Conceptual comparison of iterative, i.e., EM-like, imputation methods. The discrete variables column indicates whether the method handles discrete variables directly, where \ding{51}\ding{55} denotes a one-hot encoding workaround. The empirical focus column reports each method's main evaluation domain.}
  \label{tab:conceptual-comparison}
  \vspace{-6mm}
  \end{table*}

\paragraph{Mixed-state diffusion models}

Many real-world applications contain both continuous and discrete variables. Mixed space diffusion models combine the two constructions above by applying Gaussian diffusion to continuous coordinates and masked categorical diffusion to discrete coordinates. The forward process is obtained by concatenating the corresponding corruptions, and the reverse process jointly denoises or resamples both parts. Training typically uses a summed objective,
\[
\mathcal{L}_{\rm mixed}(\theta)
=
\mathcal{L}_{\rm ddpm}(\theta)
+
\mathcal{L}_{\rm mdm}(\theta),
\]
possibly with task-dependent weights between the continuous and discrete components.

\vspace{-2mm}
\section{Related Work}
\vspace{-2mm}

Data imputation methods aim to estimate plausible values for missing entries, or a conditional distribution over them, when values are lost during acquisition or were never measured. Imputation is central for heterogeneous tabular data \citep{yoon2018gain,du2024remasker}, token-based audio restoration \citep{dror2025token}, graph attributes \citep{you2020handling}, and other data mining settings with mixed variable types.

\paragraph{Classical methods} Early imputation methods include nearest-neighbor imputation \citep{pujianto2019knn}, Gaussian mixtures \citep{garcia2010pattern}, and HyperImpute automatic model selection \citep{hyperimpute}. More recently, deep generative models have framed imputation as conditional generation of missing values from observed entries. VAE-based methods such as MIWAE \citep{mattei2019miwae} and GP-VAE \citep{fortuin2020gpvae} model incomplete observations with latent variables, while GAIN \citep{yoon2018gain} uses adversarial training. Diffusion-based approaches such as TabCSDI \citep{zheng2022tabcsdi} and MissDiff \citep{ouyang2023missdiff} sample missing values conditional on observed entries. Many contemporary generative approaches, including ReMasker \citep{du2024remasker}, MissDiff \citep{ouyang2023missdiff}, and MIWAE \citep{mattei2019miwae}, can be viewed as optimizing an incomplete-data likelihood, where the training objective is evaluated only on the observed components of each sample.

\paragraph{Iterative methods} Iterative refinement is another long standing strategy for imputation, where initial guesses for missing values are repeatedly improved. The EM algorithm is the classical example \citep{dempster1977em}, although early uses often relied on simple distributions such as Gaussian mixtures, Bernoulli models, or multinomial models \citep{garcia2010pattern}. Related modern methods include MCFlow, which iteratively imputes with normalizing flows \citep{richardson2020mcflow}, IGRM, which updates graph based friend networks during training \citep{zhong2023igrm}, and HyperImpute, which iteratively refines model selection and imputations \citep{hyperimpute}. MIRI also uses an EM based iterative algorithm, but views imputation as reducing mutual information between the completed data and the missingness mask with rectified flows \citep{yu2025missing_miri}.

DiffPuter trains an EM-style iterative improvement procedure with a diffusion model for missing-data imputation \citep{diffputer}. DiffEM learns continuous diffusion models from corrupted observations by alternating conditional reconstruction in the E step and score matching in the M step \citep{hosseintabar2025diffem}. DiffEM’s EM analysis covers general state spaces and guarantees observation consistency without identifiability, while its diffusion models and experiments remain continuous. Our work is closest to DiffPuter and DiffEM: these methods show that iterative reconstruction and model refitting can improve diffusion-based imputation or reconstruction, but they primarily use continuous diffusion backbones. Our focus is different: we instantiate the same EM-style principle with native mixed-state diffusion backbones for heterogeneous data.

AugMask \citep{kim2026augmask} also studies training tabular diffusion models, including TabDiff, from incomplete data. Unlike our iterative EM approach, AugMask constructs fixed stochastic completions with auxiliary feature-wise models and uses them only as conditioning context under an observed-only denoising loss.

\paragraph{Disclaimer}
This work was completed and submitted to ICDM 2026 by June 6, 2026, but could not be posted on arXiv during the review process under the conference policy.

\input{chapters/methods_em.tex}

\input{chapters/experiments.tex}

\vspace{-1mm}
\section{Conclusion, Limitations, and Future Work}\label{sec:potential-impact-future-work}
\vspace{-1mm}

We introduced \ourname, an EM-style framework that instantiates diffusion imputation with native backbones for heterogeneous data, and showed empirically that it improves imputation quality on discrete text and achieves the best distributional metrics and the best average downstream rank on heterogeneous tabular data.

Our method has two main limitations, both shared with other diffusion-based EM imputers \cite{yoon2018gain,diffputer,hosseintabar2025diffem}: missing-data learning is non-identifiable, so we guarantee observed-marginal consistency rather than recovery of the true distribution, and each EM iteration requires conditional diffusion sampling followed by training, which is more expensive than single-stage imputation.

Future work includes faster conditional samplers, regularization to select among compatible distributions, and adaptation to other heterogeneous domains.

\bibliographystyle{IEEEtranN}
\bibliography{references}

%%%%%%%%%%%%%%%%%%%%%%%%%%%%%%%%%%%%%%%%%%%%%%%%%%%%%%%%%%%%

\input{chapters/appendix.tex}

%%%%%%%%%%%%%%%%%%%%%%%%%%%%%%%%%%%%%%%%%%%%%%%%%%%%%%%%%%%%

% NeurIPS paper checklist is not used for ICDM.
% \newpage
% \input{chapters/checklist.tex}

\end{document}

%% file: chapters/methods_em.tex
\vspace{-2mm}
\section{Method}
\vspace{-2mm}

This section presents \ourname. We first formulate missing-data learning as matching the observed marginals induced by a family of masks Section \ref{sec:impute-em-main}. We then describe the practical instantiation with native state space backbones Section \ref{sec:methods-practical-impl}. Finally, we characterize the exact nonparametric update and show convergence of the observed marginals as a supporting theoretical result Section \ref{sec:impute-em-theory}.

\subsection{Learning from missing data with \ourname} \label{sec:impute-em-main}
% 
% \paragraph{Objective}

Given the data distribution $p^*(x) \in \mathcal{P}(\mathcal{X})$ with the observation masks probability distribution $ \mu(m) \in \mathcal{P(\mathcal{M})}$, where masks are $m \in \{0,1\}^D$. 
% We consider the case where the masks $\mathcal{M}$ do not cover the whole space of masks $2^{[D]}$ and furthermore do not include the non mask case where data samples stay clean. In other words, we \textbf{do not see the clean data samples} $x \sim p^*(x)$. 
Following problem setup Section \ref{sec:problem-setup}, we observe only partial random vectors $x_m \sim p_m^*(x_m)$. The full distribution $p^*(x)$ is generally not identifiable from incomplete observations alone \cite{yoon2018gain}. Therefore, our objective is to learn a model $p(x)$ whose observed marginals agree with the target marginals $p_m^*(x_m)$ for the mask family used during training.

We treat $\mu(m)$ as independent of $x$ (missing completely at random, MCAR), which ensures the observed marginals $p_m^*(x_m)$ are well-defined. In practice, \ourname\ can also be applied under MAR or MNAR: the E-step and M-step remain computationally valid, though under MNAR the observed marginals are biased by the missingness mechanism and the theoretical guarantee does not apply.

This leads to the following optimization problem:

% fit the generative model $p^\theta$ to follow the observed from data distribution $p^*(x_m)$ samples as close as possible. In that light we define the following objective:

\vspace{-4mm}
\begin{equation}\label{eq:main-objective}
\begin{aligned}
&\argmin_{p} \mathbb{E}_{m \sim \mu} \left[ \KL{p^*_m(x_m)}{p_m(x_m)} \right] \\
&\qquad = \argmax_{p} \mathbb{E}_{\substack{m \sim \mu\\ x_m\sim p^*_m(x_m)}}\!\left[\log p_m(x_m)\right] + C_1.
\end{aligned}
\end{equation}
\vspace{-2mm}

% \paragraph{Variational Objective}

Where $C_1$ is an entropy term that does not depend on $p$. To solve this problem we can follow the \textbf{variational inference} approach and introduce a variational distribution $q(\overline{x}_{m}|x_{m})$ over the missing entries $\overline{x}_{m}$, which are treated as latent variables:

\vspace{-6mm}
\begin{equation}
\log p(x_m) \geq \mathbb{E}_{\overline{x}_m \sim q(\overline{x}_{m}|x_{m})} \left[\log \frac{p({\overline{x}_m}, x_{m})}{q({\overline{x}_{m}|x_{m})}}\right] = \mathcal{L}(p, q|x_m),
\end{equation}
\vspace{-4mm}

where the $\argmax_{q} \mathcal{L}(p, q)$ solution would be $q(\overline{x}_{m}|x_{m}) = p(\overline{x}_{m}|x_{m})$, for each $m$, which gives us the Expectation step. Then the full objective with lower bound:

\vspace{-4mm}
\begin{equation}
\begin{aligned}
    \mathbb{E}_{m \sim \mu,\; x_m \sim p_m^*}
    \left[\log p(x_m)\right]
    &\geq
    \mathbb{E}_{m,\; x_m \sim p_m^*}
    \left[\mathcal{L}(p, q \mid x_m)\right] \\
    &=
    \mathbb{E}_{\substack{m,\; x_m \sim p_m^*\\ \overline{x}_m \sim q(\overline{x}_m \mid x_m)}}
    \left[
        \log
            p(\overline{x}_m, x_m)\right]
         + C_2.
\end{aligned}
\label{eq:variational-lower-bound}
\end{equation}
\vspace{-4mm}

where $C_2$ is an entropy term that does not depend on $p$. 

With fixed $q$, the rhs of \eqref{eq:variational-lower-bound} is just the log likelihood for the $p(\overline{x}_{m}, x_{m})$ w.r.t. data distribution $x = \{\overline{x}_{m}, x_{m}\} \sim q(\overline{x}_{m}|x_{m})p_m^*(x_m)$.  Then we can find the solution for $\argmax_{p} \mathcal{L}(p, q)$ by the regular likelihood optimization for the model $p(x)$, which gives us the Maximization step. We call the resulting EM algorithm \textbf{\ourname}:

% $q(\overline{x}_{m}|x_{m}) = p(\overline{x}_{m}|x_{m})$

% \begin{itemize}
% \item \textbf{E}xpectation step: $q^n(\overline{x}_{m}|x_{m}) \leftarrow p^{n}_m(\overline{x}_{m}|x_{m})$
% \item \textbf{M}aximization step: $p^{n+1} \leftarrow \argmax_{p} \mathbb{E}_{m \sim \mu,\;x_m\sim p^*} \left[\mathbb{E}_{\overline{x}_{m} \sim q^n (\overline{x}_{m}|x_{m})} \log p(\overline{x}_{m}, x_{m})\right]$
% \end{itemize}

\begin{itemize}
\item \textbf{E}xpectation step: $q^n(\overline{x}_{m}|x_{m}) \leftarrow p^{n}_m(\overline{x}_{m}|x_{m})$.
\item \textbf{M}aximization step:
\[
\begin{aligned}
p^{n+1} \leftarrow{}& \argmax_{p} \mathbb{E}_{\substack{m \sim \mu,\;x_m\sim p_m^*\\ \overline{x}_{m} \sim q^n (\overline{x}_{m}|x_{m})}} \left[\log p(\overline{x}_{m}, x_{m})\right]\\
={}& \argmax_{p} \mathcal{L}(p, q).
\end{aligned}
\]
\end{itemize}

% Maximization step: $\theta^{n+1} \leftarrow \argmax_{\theta} \mathbb{E}_{m,\;x_m\sim p_m^*} \left[\mathbb{E}_{\overline{x}_{m} \sim q^n_m (\overline{x}_{m}|x_{m})} \log p^{\theta}(\overline{x}_{m}, x_{m})\right]$

\subsection{Native Diffusion Backbones for \ourname} \label{sec:methods-practical-impl}

We instantiate the EM-style update with diffusion backbones that operate in the native state space of each variable, spanning continuous, discrete, and mixed types. The E-step samples missing entries from the current backbone and the M-step retrains it on the completed data.

% One can notice that when choosing the generative model to implement the Iterative Refinement procedure the only requirements are: 1) the ability to sample from the model and 2) the ability to learn the model via the log-likelihood maximization. 

\paragraph{Expectation step} The conditional inference of the current model $p_\theta$ has to be performed at E-step. By our construction we have a conditional diffusion $p_\theta(\overline{x}_{m}|x_{m})$, which in continuous case would be the conditional mean predictor $\mu_{\theta}(\overline{x}_{m, t}, t| x_m)$ or in discrete case the conditional data predictor $\hat p_{\theta}(\overline{x}_m \mid \overline{x}_{m, t}, x_{m})$. In both cases the conditional generation would be just backward diffusion process inference with conditional noise predictor and conditional data predictor, similarly to methodology described in Section \ref{sec:diff_background}. In detail, for the continuous case:

\vspace{-4mm}
\begin{equation*}
  \begin{aligned}
  &p_{\theta^n}(\overline{x}_{m, t-1} \mid \overline{x}_{m, t}, x_m)\\
  &\qquad = \mathcal{N}\!\left(\overline{x}_{m, t-1}; \mu_{\theta^n}(\overline{x}_{m, t}, t \mid x_m), \sigma_t^2 I\right).
  \end{aligned}
\end{equation*}
\vspace{-2mm}

% where $ \mu_{\theta^n}(x_{m, t}, t| x_m) = \frac{1}{\sqrt{\alpha_t}}(x_{m, t} - \frac{\beta_t}{\sqrt{1-\bar\alpha_t}} \epsilon_{\theta^n}(x_{m, t}, t| x_m))$ and $\sigma_t$ is noise coefficient. 

For the discrete case:

\vspace{-4mm}
\begin{equation*}
  \begin{aligned}
  &p_{\theta^n}(\overline{x}_{m, t-1} \mid \overline{x}_{m, t}, x_m) = \\
  &\quad \hspace{-5mm}{\rm Cat}\!\left(\overline{x}_{m, t-1}\,\middle|\, \frac{(1- \alpha_{t-1})M  +  (\alpha_{t-1} - \alpha_t)\, \hat p_{\theta^n}(\overline{x}_m \mid \overline{x}_{m, t}, x_m)}{1-\alpha_t}\right).
  \end{aligned}
\end{equation*}
\vspace{-2mm}

where $\alpha_t$ coefficients which are determined by the forward process construction. Then such the probability distribution $\mathcal{D}^n(x)$, would be the result of E-step:

\vspace{-4mm}
\begin{equation}\label{eq:m-step-data-Dn}
  \mathcal{D}^n(x) = \mathbb{E}_{m \sim \mu}[\mathcal{D}^n(\overline{x}_m, x_m)] = \mathbb{E}_{m \sim \mu}[p_{\theta^n}(\overline{x}_m| x_m)\, p_m^*(x_m)]
\end{equation}
\vspace{-4mm}

In practice, the \textit{finite dataset} is used instead of $\mathcal{D}^n(x)$. Notice, that E-step requires sampling from the learned model, which introduces additional computational overhead for diffusion models \cite{ho2020denoising, austin2021structured}. Similar overhead is inherent to EM-based methods with diffusion backbones \cite{diffputer, hosseintabar2025diffem, yu2025missing_miri}.

\paragraph{Maximization step} During the M step we have to fit the family of conditional generative models $p_\theta(\overline{x}_{m}|x_{m})$ to $\mathcal{D}^n(x)$ data via likelihood maximization or equivalent optimize the following loss function:

% In practice, to add a clean data conditional inference mechanism to the diffusion model, at the M step we learn not the single generative model but the family of consistent generative models conditioned on observed data, i.e.,  $p_\theta(\overline{x}_{m}|x_{m})$. The conditional 
% x_mis, x_obs, m as input to loss

\vspace{-5mm}
\begin{equation}
  \label{eq:parametric-update}
  \begin{aligned}
  \mathcal{L}_{\rm cond}(\theta) &= -\mathbb{E}_{x \sim \mathcal{D}^n(x), m \sim \mu} \left[ \log p_{\theta}(\overline{x}_{m}|x_{m}) \right]
  \end{aligned}
\end{equation}
\vspace{-4mm}

In practice, learning of conditional diffusion models is long standing practice \cite{sahoo2024mdlm} and is done by simply adding another input to the neural network and slightly modifying the loss function. Final loss function for continuous state space

\vspace{-5mm}
\begin{equation}
  \label{eq:diffusion-loss-cont}
  \mathcal{L}_\mathrm{cond}^{\rm ddpm}(\theta) = \mathbb{E}_{\substack{x \sim \mathcal{D}^n(x),\, m \sim \mu\\ t \sim \mathcal{U}[0,T],\, \epsilon \sim \mathcal{N}(0, I)}} \left[
    \left\| \epsilon - \epsilon_\theta(\overline{x}_{m, t}, t| x_{m}) \right\|^2
  \right]
\end{equation}
\vspace{-4mm}

where $\overline{x}_{m, t} = \sqrt{\bar\alpha_t} \overline{x}_m + \sqrt{1-\bar\alpha_t} \epsilon$ represents the forward noising process, and $\epsilon_\theta$ is the model prediction. The conditional structure is enforced by providing the mask $m$ and observed values $x_m$ as inputs to the network.

For discrete state space, we notice that MDM is naturally suited for the imputation problem since it already learns a family of conditional generative models by design \cite{ou2025absorbing} and carries the information about the masked values by special masked token. In that light we can use the neural network architecture as and just slightly alter the  training procedure:

\begin{algorithm}[t]
  \caption{\ourname}
  \label{alg:ambientmdm}
  \KwIn{Initial parameters $\theta^0$, observed data $\{x_m^i\}_{i=1}^L$, number of EM iterations $N$}
  \KwOut{Refined parameters $\theta^N$}
  Initialize $n \leftarrow 0$\;
  \While{$n < N$}{
    Initialize dataset $\mathcal{D}^{n} \leftarrow \emptyset$\;
    \For{each $x_m^i \in \{x_m^i\}_{i=1}^L$}{
      Sample a batch $x_i^{n} = x^i \sim p_{\theta^n}(\overline{x}_m\mid x_m^i)\,p_m^*(x_m^i)$\; \tcp{Inference diffusion $\theta^n$}
      Update dataset $\mathcal{D}^{n} \leftarrow \mathcal{D}^{n} \cup x_i^{n}$\;
    }
    $\theta \leftarrow \theta^n$\;
    \While{\textnormal{not converged}}{
      Update $\theta$ with an optimization step on $\mathcal{L}_{\rm cond}(\theta) = -\mathbb{E}_{x \sim \mathcal{D}^{n},\, k \sim \mu} \left[ \log p_{\theta}(\overline{x}_{k}|x_k, k) \right]$\;
    }
    $\theta^{n+1} \leftarrow \theta$\;
    $n \leftarrow n+1$\;
  }
  \Return $\theta^N$\;
  \end{algorithm}

\vspace{-4mm}
\begin{equation}
  \label{eq:diffusion-loss-disc}
  \mathcal{L}^\mathrm{mdm}_{\rm cond}(\theta) = \mathbb{E}_{\substack{x \sim \mathcal{D}^n(x),\, m \sim \mu\\ t \sim \mathcal{U}[0,T]\\ \overline{x}_{m, t} \sim q_{\rm{MDM}, t}(\overline{x}_{m, t}|\overline{x}_{m})}} \left[\mathrm{CE}\bigl(e_{\overline{x}_{m}}, \hat p_\theta(\cdot| x_{m})\bigr)
  \right]
\end{equation}
\vspace{-5mm}

\paragraph{Heterogeneous mixed-state diffusion}
In our implementation, the mixed state space model is built by concatenating continuous and discrete diffusion components, and the neural networks take the full multimodal input in one forward pass. The joint training objective would be just the sum of continuous and discrete losses:

\vspace{-4mm}
\begin{equation}
\mathcal{L}_{\rm cond}(\theta)
=
\mathcal{L}_{\rm cond}^{\rm ddpm}(\theta)
+
\mathcal{L}_{\rm cond}^{\rm mdm}(\theta),
\end{equation}
\vspace{-2mm}

The other details including the full joint training objective and reverse sampling updates are delivered in Appendix~\ref{app:add-methods}.

\paragraph{Initial model} Initialization is an important practical choice for \ourname. We set $p_0^\theta$ to diffusion models trained via the incomplete likelihood objectives, see Appendix~\ref{app:add-methods} for more details. On continuous domain we use training close to MissDiff \cite{ouyang2023missdiff}. On discrete domain we train a masked diffusion model as an incomplete data masked autoencoder, analogous to ReMasker \cite{du2024remasker}.

We summarize the resulting practical procedure in Algorithm~\ref{alg:ambientmdm}, which alternates E-step conditional diffusion sampling with M-step conditional training.

\subsection{Observed-marginal consistency of the exact update}\label{sec:impute-em-theory}

We now characterize the exact nonparametric operator underlying Algorithm~\ref{alg:ambientmdm} and state the observed-marginal consistency result that supports it. The analysis is idealized: it assumes exact E-step sampling and exact M-step maximization. The practical diffusion approximations introduced in Section \ref{sec:methods-practical-impl} relax both.

% \paragraph{Exact EM operator at the distribution level.}

\begin{proposition}[\ourname\ update]\label{prop:impute-em-update}

Let $\mathcal M \subseteq \{0, 1\}^D$ be the family of observed index sets, and $ \mu(m) \in \mathcal{P(\mathcal{M})}$ be the positive probability distribution of observed indices. For a joint distribution $p \in \mathcal P(\mathcal X)$, define the exact \ourname\ update.

\vspace{-3mm}
\begin{equation}
p^{n+1}(x) = \mathbb{E}_{m \sim \mu} \left[ \, p_m^*(x_m) p^{n}(\overline{x}_m\mid x_m) \right],
\label{eq:em-operator-general}
\end{equation}
\vspace{-5mm}

where $p_m^*$ denotes the target marginal on the observed coordinates indexed by $m$.

\end{proposition}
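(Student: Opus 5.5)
The content of Proposition~\ref{prop:impute-em-update} is that the exact E-step followed by the exact M-step of Section~\ref{sec:impute-em-main} yields exactly the distribution in \eqref{eq:em-operator-general}. I would prove it by computing both steps in closed form at the level of distributions. First, for a fixed iterate $p^n$, the E-step sets $q^n(\overline{x}_m\mid x_m)=p^n(\overline{x}_m\mid x_m)$ for every $m\in\mathcal M$. This is the maximizer of $\mathcal{L}(p^n,q\mid x_m)$, because the gap $\log p^n_m(x_m)-\mathcal{L}(p^n,q\mid x_m)$ equals $\KL{q(\cdot\mid x_m)}{p^n(\cdot\mid x_m)}\ge 0$. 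The conditional is well defined $p^*_m$-a.e. as long as $p^n_m$ charges the support of $p^*_m$. I would state this as a standing assumption; it holds inductively once $p^0$ has full support.

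Second, I rewrite the M-step objective as a single cross-entropy. For each $m$, form the completed joint $\mathcal{D}^n_m(x)=p^*_m(x_m)\,p^n(\overline{x}_m\mid x_m)$, which is a probability distribution on $\mathcal X$. Its density is taken with respect to the product of Lebesgue measure on the $\mathbb{R}^{d_1}$ coordinates and counting measure on $\mathcal S$, so mixed coordinates cause no difficulty. The M-step objective is then
\[
\mathbb{E}_{m\sim\mu}\,\mathbb{E}_{x\sim\mathcal{D}^n_m}[\log p(x)]=\mathbb{E}_{x\sim\mathcal{D}^n}[\log p(x)],
\qquad
\mathcal{D}^n:=\mathbb{E}_{m\sim\mu}[\mathcal{D}^n_m].
\]
The identity uses linearity of expectation in the mixing measure; since $\mathcal M$ is finite, Fubini/Tonelli applies. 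The mixture $\mathcal{D}^n$ is exactly \eqref{eq:m-step-data-Dn}.

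Third, I maximize over $p\in\mathcal P(\mathcal X)$ using Gibbs' inequality:
\[
\mathbb{E}_{\mathcal{D}^n}[\log p]=\mathbb{E}_{\mathcal{D}^n}[\log \mathcal{D}^n]-\KL{\mathcal{D}^n}{p}.
\]
The maximum is therefore attained uniquely, up to null sets, at $p=\mathcal{D}^n$. This gives $p^{n+1}=\mathbb{E}_{m\sim\mu}\!\left[p^*_m(x_m)\,p^n(\overline{x}_m\mid x_m)\right]$, which is \eqref{eq:em-operator-general}. A short check confirms that the right-hand side integrates to one, since each $\mathcal{D}^n_m$ does.

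The main obstacle is the finiteness and measure-theoretic side of the Gibbs step, not the algebra. The entropy of $\mathcal{D}^n$ may be infinite or undefined in the continuous part, and the conditionals $p^n(\cdot\mid x_m)$ must be well defined on the support of $p^*_m$. I would handle this by phrasing the M-step as minimization of $\KL{\mathcal{D}^n}{p}$, which is what the argmax means once the $p$-independent term is dropped. That formulation makes uniqueness clean and avoids assuming finite entropy. I would also note explicitly that uniqueness is only up to $\mathcal{D}^n$-null sets.
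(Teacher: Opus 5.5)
Your proposal matches the paper's proof: the exact E-step sets the conditional $p^n(\overline{x}_m\mid x_m)$, the M-step objective becomes the cross-entropy against the mixture $\mathbb{E}_{m\sim\mu}[p_m^*(x_m)\,p^n(\overline{x}_m\mid x_m)]$, and nonnegativity of $\KL{\cdot}{p}$ identifies that mixture as the maximizer. Your additional care is correct and goes beyond the paper, which glosses over these points: you require $p_m^*\ll p_m^n$ so the conditionals are defined (it in fact holds automatically for $n\ge 1$, since $p^n_m\ge\mu(m)p^*_m$), you phrase the M-step as KL minimization to avoid infinite entropy, and you state uniqueness only up to null sets.
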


See the proof in Appendix~\ref{app:proofs}. This proposition gives the closed-form update operator. The following theorem shows that at the limit of iterations the observed marginals converge to the targets.

\begin{figure}[t]
  \centering
  \begin{subfigure}[t]{\linewidth}
    \centering
    \includegraphics[width=0.75\linewidth]{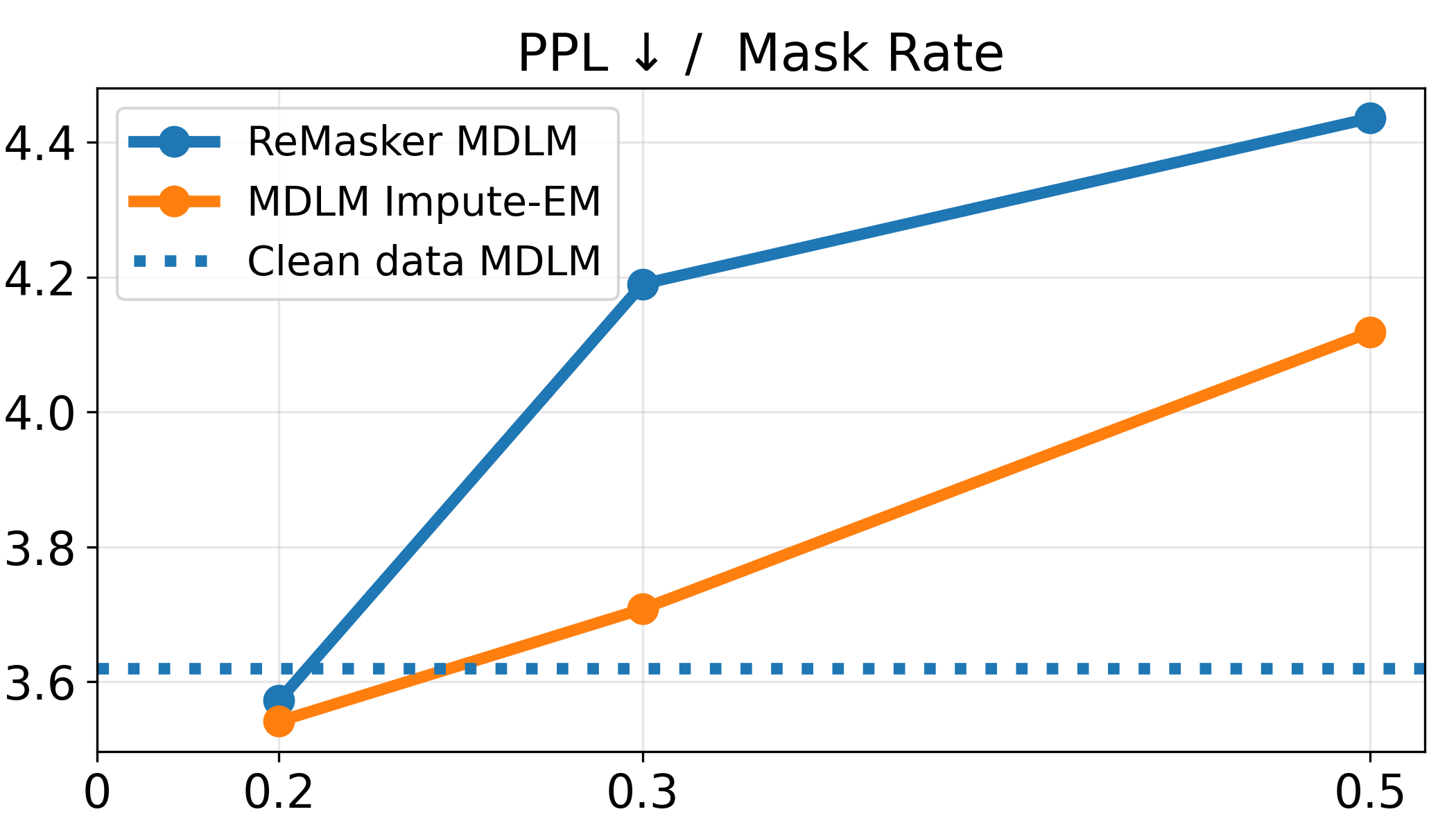}%
    \caption{PPL $\downarrow$ versus mask rate. ReMasker MDLM and Clean data MDLM as baselines, compared with MDLM \ourname\ (ours).}
    \label{fig:text8-ppl-mask}
  \end{subfigure}\\[1ex]
  \begin{subfigure}[t]{\linewidth}
    \centering
    \includegraphics[width=0.75\linewidth]{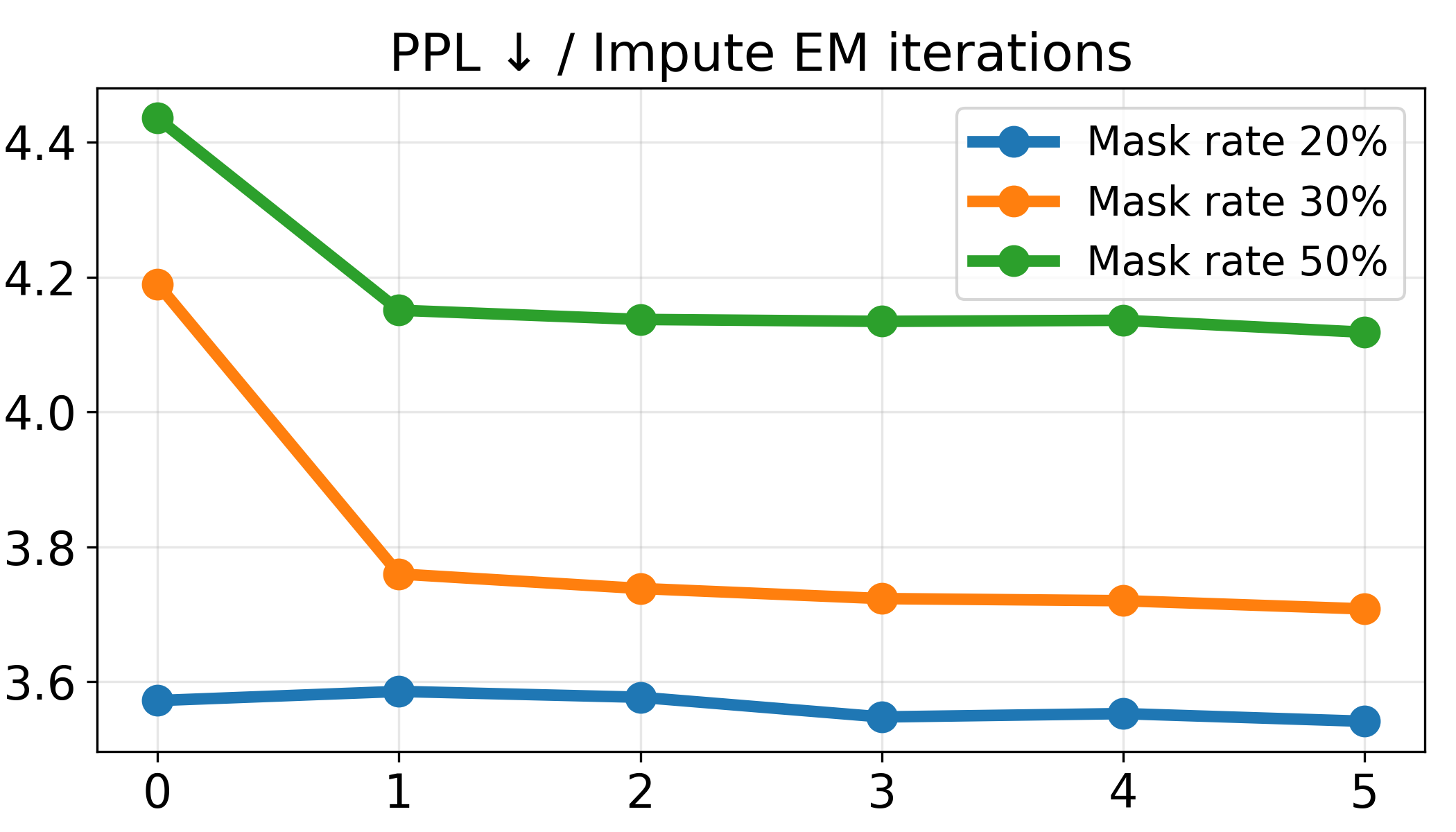}%
    \caption{PPL $\downarrow$ versus number of \ourname\ iterations across mask rates.}
    \label{fig:text8-ppl-em}
  \end{subfigure}
  \caption{Text data imputation on text8 under MCAR per token.}
  \label{fig:text8-ppl}
\vspace{-4mm}
\end{figure}

  \begin{theorem}[Observed-marginal consistency of exact \ourname]\label{thm:lyapunov-inequality}
    Let $p^*(x) \in \mathcal{P}(\mathcal{X})$ be the data probability distribution, $\mathcal M \subseteq \{0, 1\}^D$ be the family of observed index sets and 
    $ \mu(m) \in \mathcal{P(\mathcal{M})}$ be the positive probability distribution of observed indices. 
    %Recall that we denote by $p_m$ corresponding marginal of the distribution $p$.
    Define the set of \eqref{eq:main-objective} minimizers:

    \vspace{-3mm}
    \begin{equation}
    \mathcal C
    :=
    \{q \in \mathcal P(\mathcal X) : q_m = p_m^* \;\; \forall m \in \mathcal M\}.
    \end{equation}
    \vspace{-3mm}

    Define starting probability distribution $p^0(x)$, such that there exists $r\in\mathcal C$ with $\KL{r}{p^0}<\infty$, and define iterative updates:

    \vspace{-3mm}
    \begin{equation*}
    p^{n+1}(x) = \mathbb{E}_{m \sim \mu} \left[ \, p_m^*(x_m) p^{n}(\overline{x}_m\mid x_m) \right].
    \end{equation*}
    \vspace{-3mm}

    Then, for every $m \in \mathcal M$,

    \vspace{-3mm}
    \begin{equation}
    \KL{p_m^*}{p^n_m} \to 0
    \qquad\text{as } n \to \infty.
    \end{equation}
    \vspace{-6mm}

    \end{theorem}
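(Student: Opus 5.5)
The plan is a Lyapunov (Fejér-type) argument. Fix the element $r\in\mathcal C$ with $\KL{r}{p^0}<\infty$ guaranteed by the hypothesis and use $n\mapsto \KL{r}{p^n}$ as a potential. The goal is the one-step inequality
\begin{equation*}
\KL{r}{p^{n+1}} \;\le\; \KL{r}{p^{n}} \;-\; \mathbb{E}_{m\sim\mu}\bigl[\KL{p^*_m}{p^n_m}\bigr].
\end{equation*}
Granting it, induction shows $\KL{r}{p^n}\le \KL{r}{p^0}<\infty$ for all $n$. Telescoping over $n=0,\dots,N-1$ gives
\begin{equation*}
\sum_{n=0}^{N-1}\sum_{m\in\mathcal M}\mu(m)\,\KL{p^*_m}{p^n_m}\;\le\;\KL{r}{p^0}.
\end{equation*}
The series therefore converges, so its terms tend to zero. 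Since $\mu(m)>0$ for every $m\in\mathcal M$ and all KL terms are nonnegative, each $\KL{p^*_m}{p^n_m}\to 0$, which is the claim.

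To prove the one-step inequality, write the update of Proposition~\ref{prop:impute-em-update} as a mixture $p^{n+1}=\sum_m \mu(m)\,Q^n_m$, where $Q^n_m(x):=p^*_m(x_m)\,p^n(\overline{x}_m\mid x_m)$. Trivially $r=\sum_m\mu(m)\,r$. Joint convexity of KL, applied to these two mixtures with the same weights, gives
\begin{equation*}
\KL{r}{p^{n+1}}\le\sum_m\mu(m)\,\KL{r}{Q^n_m}.
\end{equation*}
Next apply the chain rule for KL to each term, splitting along observed and missing coordinates:
\begin{equation*}
\KL{r}{Q^n_m}=\KL{r_m}{p^*_m}+\mathbb{E}_{x_m\sim r_m}\KL{r(\cdot\mid x_m)}{p^n(\cdot\mid x_m)}.
\end{equation*}
The first term vanishes because $r\in\mathcal C$ gives $r_m=p^*_m$. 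Applying the chain rule to $\KL{r}{p^n}$ in the same way identifies the second term as $\KL{r}{p^n}-\KL{r_m}{p^n_m}=\KL{r}{p^n}-\KL{p^*_m}{p^n_m}$. Averaging over $m\sim\mu$ then yields the inequality.

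The main obstacle is the measure-theoretic bookkeeping on the mixed space $\mathcal X=\mathbb R^{d_1}\times\mathcal S$. First, the conditionals $p^n(\overline{x}_m\mid x_m)$ are regular conditional distributions, so the update is defined only up to $p^n_m$-null sets. Second, the chain rule and the subtraction above require finite quantities and absolute continuity. I would handle both via the induction. Finiteness of $\KL{r}{p^n}$ implies $r\ll p^n$, hence $p^*_m=r_m\ll p^n_m$, so $Q^n_m$ is well defined $r_m$-a.e. The same finiteness, via the data-processing inequality, gives $\KL{p^*_m}{p^n_m}\le\KL{r}{p^n}<\infty$, so the subtraction is legitimate. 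I would state the chain rule for the product of a Lebesgue-type and a counting reference measure, using densities with respect to $\lambda^{d_1}\otimes$ counting measure where they exist. For the general case I would use disintegration.
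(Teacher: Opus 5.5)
Your proposal is correct and follows essentially the same route as the paper. Both arguments use the potential $\KL{r}{p^n}$, convexity of KL over the mixture $p^{n+1}=\sum_m\mu(m)Q^n_m$, the chain-rule identity $\KL{r}{Q^n_m}=\KL{r}{p^n}-\KL{p^*_m}{p^n_m}$ (which uses $r_m=p^*_m$), and telescoping together with $\mu(m)>0$; your handling of absolute continuity and of the well-definedness of the conditionals through the induction is slightly more explicit than the paper's.
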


The proof is presented in Appendix~\ref{app:proofs}. The Theorem \ref{thm:lyapunov-inequality} provides that if the \ourname\ procedure is ran long enough, the $p^n$  is guaranteed to \textit{that satisfy the observed marginals $p_m^*$ for all $m \in \mathcal M$}. This property is independent of the start of iteration $p_0$ and the mask probability distribution $\mu(m)$, under slight assumptions. However, there could be many such possible distributions $p$ with fitting marginals, i.e., $p \in \mathcal{C}$, and our theoretical result does not provide any information on what particular distribution would be the result of \ourname\ iteration, i.e., it is not necessarily $p^*$. The theorem concerns the exact nonparametric operator. Its diffusion implementation is a parametric surrogate, so the result supports the target update but does not guarantee convergence of the finite-sample implementation.
% and 2) any information on the dependence of result of \ourname\ iteration and starting distribution $p_0$.

% In addition, for the modern generative models such an algorithm was studied only in the context of continuous state space and research on discrete state space and furthermore mixed state space was not covered extensively.

% \paragraph{Computational complexity.} In practice, the E-step requires sampling from the learned model, which introduces additional computational overhead for diffusion models \cite{ho2020denoising, austin2021structured}. Similar overhead is inherent to EM-based methods with diffusion backbones \cite{diffputer, hosseintabar2025diffem, yu2025missing_miri}.

%% file: chapters/experiments.tex
\vspace{-2mm}
\section{Experiments}\label{sec:exp-main}
\vspace{-2mm}

\begin{figure}[t]
  \centering
  \includegraphics[width=\linewidth]{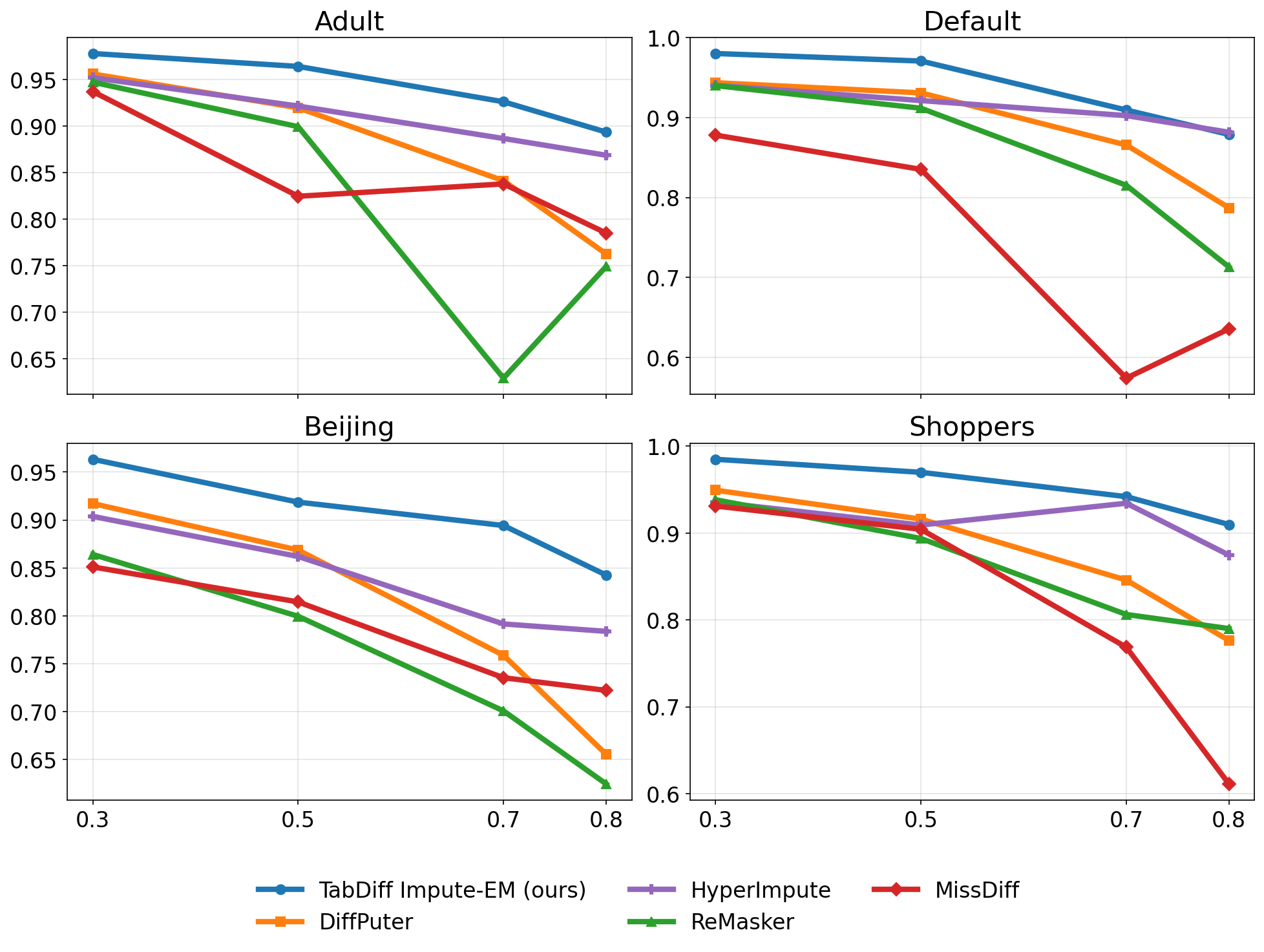}%
  \caption{Overall Density $\uparrow$ under MCAR versus missing rate (Adult, Shoppers, Default, and Beijing). Overall Density is $(\mathrm{Trend}+\mathrm{Shape})/2$.}
  \label{fig:tabular-density-overall}
\vspace{-4mm}
\end{figure}

\begin{figure}[t]
  \centering
  \includegraphics[width=\linewidth]{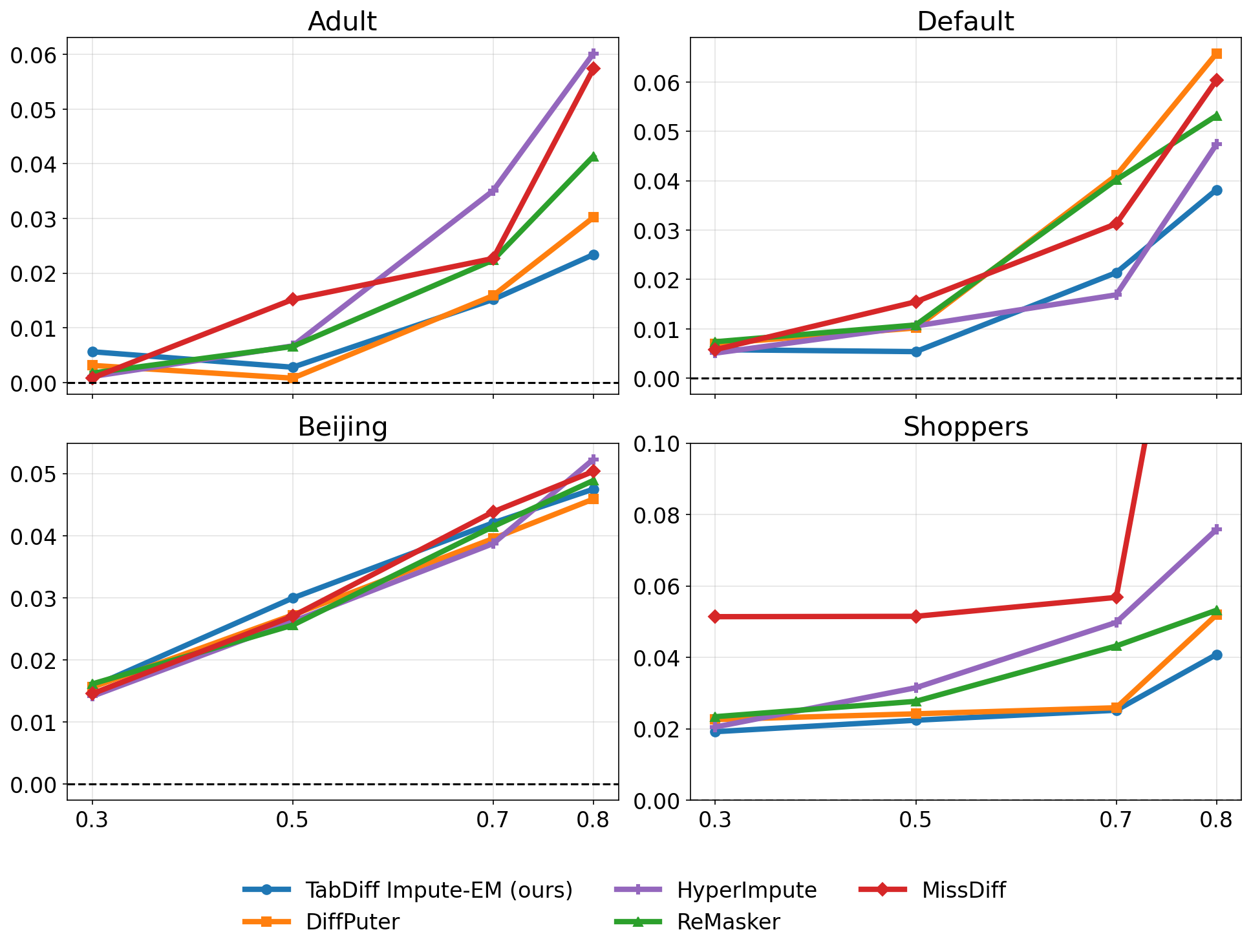}%
  \caption{ML-efficiency absolute deviation $\downarrow$ under MCAR versus missing rate (Adult, Shoppers, Default, and Beijing), relative to the MLE on the clean dataset. Lower is better. Beijing MLE is multiplied by $0.1$ to keep the scale comparable across datasets.}
  \label{fig:tabular-mle}
\vspace{-4mm}
\end{figure}

In this section, we evaluate \ourname\ on heterogeneous, mixed-type data imputation. Our central claim concerns this mixed-type setting, so the main experiment is on tabular data with TabDiff \cite{tabdiff}, which exercises the full mixed continuous-categorical backbone. Since, to our knowledge, native discrete diffusion has not previously been used as an EM-style train-from-incomplete-data backbone for imputation, we additionally study text imputation with a Masked Diffusion Language Model \cite{sahoo2024mdlm} on text8 as a controlled validation of this backbone in isolation, the degenerate case where all coordinates are categorical ($d_1 = 0$). Our problem statement is generative, the goal is to recover the imputation probability distribution $p(\overline{x}_m|x_m)$ rather than individual entries, so we evaluate with generative metrics (PPL for text, Trend and Shape for tabular data), and additionally report downstream ML-efficiency for practical utility.

\subsection{Discrete data: text}

We use the `text8` dataset \cite{mahoney2006text8}, tokenized at the character level, with vocabulary size $27$ and sequence length $128$. We consider several missing rates $p \in \{0.2, 0.3, 0.5\}$ and randomly replace tokens in the dataset samples with a special mask token with probability $p$ (MCAR per token). The missing tokens are generated only once for each data sample. Training is done on the train split and the imputation is done on validation split, following out-of-sample imputation setup.

We compare our method with the ReMasker MDLM, an MDLM trained as a masked autoencoder on observed data only via the incomplete likelihood following the ReMasker \cite{du2024remasker} methodology. The ReMasker MDLM and the initialization of \ourname\ were trained for $500k$ iterations, while MDLM \ourname\ (ours) ran 5 EM iterations of $20k$ training iterations each. All the experimental details can be seen in Appendix~\ref{app:exp-details}.

Our metric is perplexity (PPL, lower is better), the exponential of the average negative log-likelihood per token. PPL scores how well the model's per-token predictive distribution matches held-out tokens, which makes it both the standard distributional metric for masked language modeling \cite{sahoo2024mdlm} and directly tied to token-wise accuracy, hence suitable for measuring imputation ability.
See Appendix~\ref{app:exp-details} for the formal definition. The results are shown in Figure \ref{fig:text8-ppl}. It is evident that missing rates more than $p=0.2$ yield a degradation in model performance which results in significant increase in perplexity. However, MDLM \ourname\ (ours) allows to \textbf{recover the losses in perplexity} w.r.t. Clean MDLM model, which is clearly visible from the Figure \ref{fig:text8-ppl-mask}. Additionally, we study the behavior of \ourname\ by the iterations number and report the results in Figure \ref{fig:text8-ppl-em}. It is clearly visible that \ourname\ iterations improve the imputation quality and the first few \ourname\ steps yield the most significant improvements.

\subsection{Heterogeneous data: tabular}\label{sec:experiments-tabular}

Having validated the discrete backbone in isolation, we now turn to the full heterogeneous setting. We combine the naturally mixed-state TabDiff backbone~\cite{tabdiff} with \ourname, avoiding the one-hot relaxations used by some prior work~\cite{zheng2022tabcsdi, ouyang2023missdiff}.

We consider four mixed-type datasets with both numerical and categorical features: Adult, Shoppers, Default, and Beijing (see Appendix \ref{app:exp-details} for descriptions). As baselines, we compare state-of-the-art approaches: DiffPuter \cite{diffputer} as a diffusion-based EM method, ReMasker \cite{du2024remasker} as a masked autoencoder baseline, and MissDiff \cite{ouyang2023missdiff} as an incomplete likelihood diffusion approach. Both training and imputation take place on the training data (in-sample imputation), and we run Tabdiff \ourname\ for 5 iterations. Further implementation details are in Appendix~\ref{app:exp-details}.

For the wide performance ablation we follow the standard tabular imputation benchmark under the Missing Completely At Random (MCAR) setting, varying the missing rate across $p \in \{0.2, 0.3, 0.5, 0.7\}$. We report \emph{Overall Density} (the average of Shape and Trend marginal and dependency scores~\cite{tabdiff}) for generative fidelity, and ML-efficiency for downstream utility. Both are described in Appendix~\ref{app:exp-details}.

To assess robustness under more challenging missingness, we additionally evaluate all methods under the Missing Not At Random (MNAR) setting on the same four datasets at a single high mask rate $p=0.7$. The MNAR mechanism is described in Appendix~\ref{app:exp-details}. All models are trained with the same hyperparameters as in the MCAR experiments. Figure~\ref{fig:tabular-mnar} reports Overall Density and ML-efficiency deviation, and the discriminative-metric counterpart is in Appendix~\ref{app:add-exps}. TabDiff \ourname\ (ours) still holds its positions in this setting. Additional single-run results under MAR are reported in Appendix~\ref{app:mar-results}.

\begin{figure}[t]
  \centering
  \includegraphics[width=0.9\linewidth]{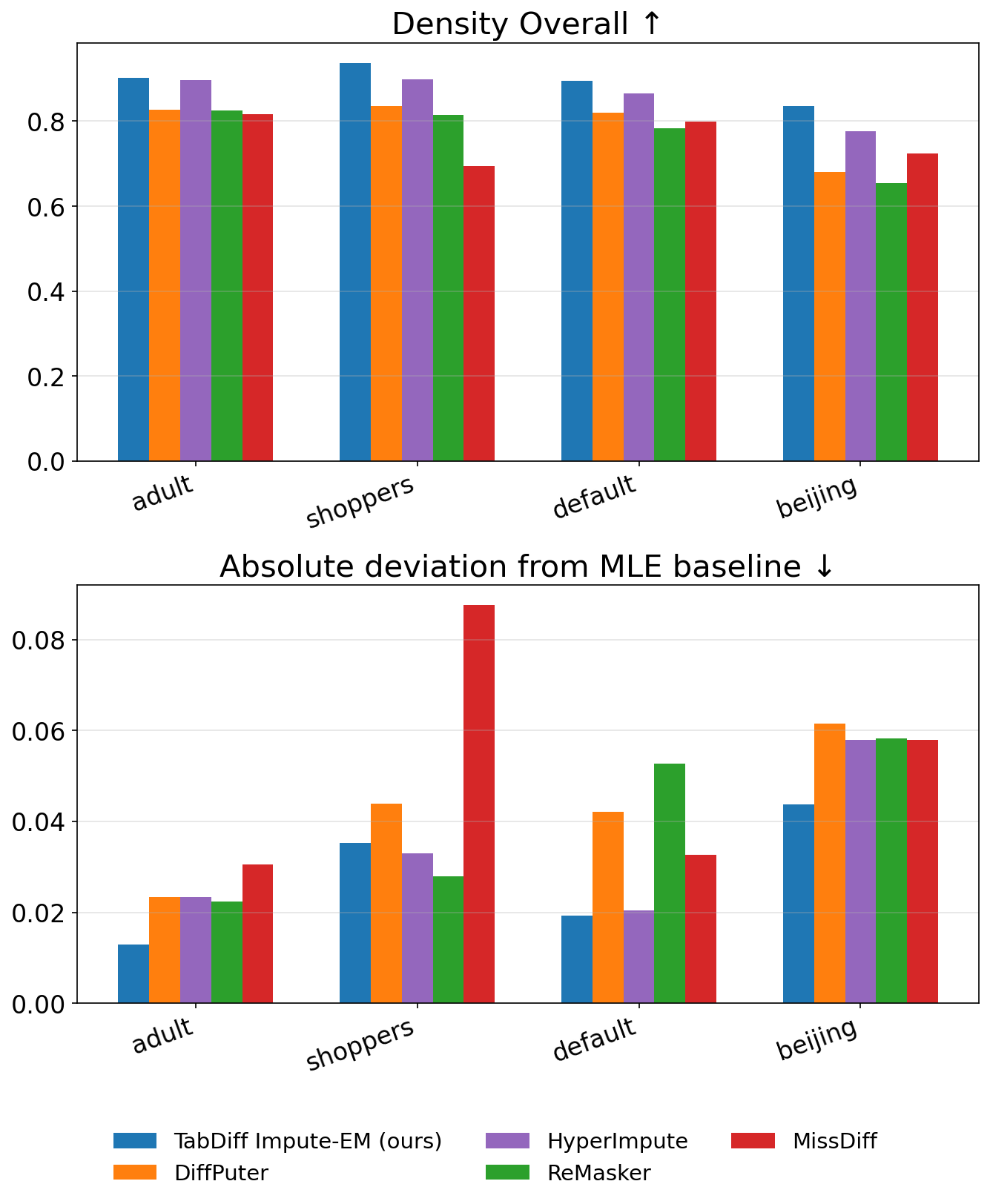}%
  \caption{Under MNAR at mask rate $p=0.7$: Overall Density $\uparrow$ and ML-efficiency absolute deviation $\downarrow$ across the four datasets. Beijing MLE is multiplied by $0.1$ to keep the scale comparable across datasets.}
  \label{fig:tabular-mnar}
\vspace{-2mm}
\end{figure}

\begin{table}[h]
  \centering
  \footnotesize
  \setlength{\tabcolsep}{4pt}
  \begin{tabular}{lccccc}
    \toprule
    Method & Adult & Shoppers & Default & Beijing & Avg. \\
    \midrule
    ReMasker & 0.194 & 0.143 & 0.155 & 0.253 & 0.186 \\
    DiffPuter & 0.136 & 0.133 & 0.118 & 0.207 & 0.148 \\
    HyperImpute & 0.093 & 0.086 & 0.089 & 0.165 & 0.108 \\
    MissDiff & 0.154 & 0.196 & 0.269 & 0.219 & 0.210 \\
    TabDiff \ourname\ (ours) & \textbf{0.060} & \textbf{0.050} & \textbf{0.071} & \textbf{0.102} & \textbf{0.071} \\
    \bottomrule
  \end{tabular}
  \vspace{1mm}
  \caption{Overall Density error $\downarrow$ on tabular datasets under MCAR. For each dataset the error is averaged over the mask rates. The last column reports the averaged error. The best method is highlighted in \textbf{bold}.}
  \label{tab:density-overall-error}
\vspace{-4mm}
\end{table}

\begin{table}[h]
  \centering
  \footnotesize
  \setlength{\tabcolsep}{4pt}
  \begin{tabular}{lccccc}
    \toprule
    Method & Adult & Shoppers & Default & Beijing & Rank \\
    \midrule
    ReMasker & 3 & 3 & 2 & 2 & \circledavg{3} \\
    DiffPuter & 2 & 2 & 4 & \textbf{1} & \circledavg{2} \\
    HyperImpute & 5 & 4 & 2 & 2 & \circledavg{4} \\
    MissDiff & 4 & 4 & 3 & 4 & \circledavg{4} \\
    TabDiff \ourname\ (ours) & \textbf{1} & \textbf{1} & \textbf{1} & 3 & \circledavg{\textbf{1}} \\
    \bottomrule
  \end{tabular}
  \vspace{1mm}
  \caption{Ranking downstream performance on tabular datasets under MCAR. Ranks follow the averaged absolute deviation from the clean-data ML-efficiency in each dataset. The last column ranks methods by their average rank across datasets. Lower is better. The best method is highlighted in \textbf{bold}.}
  \label{tab:mle-ranking}
\vspace{-4mm}
\end{table}
% The distributions metric results are shown in Figure \ref{fig:tabular-density-overall} and Table~\ref{tab:density-overall-error}. While downstream performance is shown in Figure \ref{fig:tabular-mle} and Table~\ref{tab:mle-ranking}. 

% To isolate the contribution of \ourname\ from the TabDiff backbone, Appendix~\ref{app:add-exps} reports an iteration-dynamics analysis across generative, downstream, and discriminative metrics, showing that \ourname\ updates substantially improve $3$ of $4$ of them over the plain TabDiff (incomplete-likelihood) baseline. The same appendix contains runtime comparisons and additional metrics.

To isolate the contribution of \ourname\ from the TabDiff backbone, Appendix~\ref{app:add-exps} reports an iteration-dynamics analysis across different metrics, showing that \ourname\ iterative updates substantially improve majority of them over the plain TabDiff (incomplete-likelihood) baseline. The same appendix contains runtime comparisons and additional metrics.

Across both MCAR and MNAR, TabDiff \ourname\ (ours) delivers the \textbf{best distributional metrics} and the \textbf{best average downstream rank}. Under MCAR, Figure~\ref{fig:tabular-density-overall} and Table~\ref{tab:density-overall-error} show an almost twofold reduction in distributional error w.r.t. the second best baseline across all mask rates, while Figure~\ref{fig:tabular-mle} and Table~\ref{tab:mle-ranking} give TabDiff \ourname\ the best average downstream rank from per-dataset ML-efficiency errors. Under MNAR at $p=0.7$, Figure~\ref{fig:tabular-mnar} reproduces this picture, with TabDiff \ourname\ again leading on both Overall Density and ML-efficiency.

% We also run an ablation on the quality of categorical features imputation under the same MCAR setup in see Appendix~\ref{app:categorical-only}.

% \textbf{Comparisons}: to other imputation methods such as diffputer (diffusion with EM counterpart), remasker (masked autoencoder approach), hyperimpute which i don't have at the moment (classic ML approach) and others if possible.

% \textbf{Analysis}: Fig.~\ref{fig:tabular-density-overall} and Fig.~\ref{fig:tabular-mle} report overall density quality and MLE; additional metrics (e.g.\ Trend/Shape) follow the same layout per dataset.

% \begin{itemize}
%   \item Compare Trend/Shape and MLE. Maybe make one table at some particualr missing rate (70%?, 50%?).
%   \item For each of the dataset plot the curve Missing Rate vs Metrics
%   \item Analyse the cat features distinctly. Plot trend/shape and MLE
% \end{itemize}

%% file: chapters/appendix.tex
\appendices

\vspace{-2mm}
\section{Proofs}\label{app:proofs}
\vspace{-2mm}

\begin{proof}[Proof of the Proposition \ref{prop:impute-em-update}]
Fix the current iterate $p^n \in \mathcal{P}(\mathcal{X})$. In the exact E-step, for each mask
$m \in \mathcal{M}$ the variational conditional over the missing coordinates is
\begin{equation}
    q^n(x_{\bar m}\mid x_m) = p^n(x_{\bar m}\mid x_m),
\end{equation}
which induces the completed-data measure
\begin{equation}
    \pi_m(p^n)(x) := p_m^*(x_m)\,p^n(x_{\bar m}\mid x_m).
\end{equation}
The M-step maximizes the expected complete-data log-likelihood, which in terms of the mixture
\begin{equation}
    \bar \pi^n(x)
    := \mathbb{E}_{m\sim\mu}\bigl[\pi_m(p^n)(x)\bigr]
    = \mathbb{E}_{m\sim\mu}\bigl[p_m^*(x_m)\,p^n(x_{\bar m}\mid x_m)\bigr]
\end{equation}
reads $\argmax_{p\in\mathcal P(\mathcal X)} \int_{\mathcal X} \log p(x)\, d\bar \pi^n(x)$. Since
$\KL{\bar \pi^n}{p}\ge 0$,
\begin{equation}
    \int_{\mathcal X} \log p\, d\bar \pi^n
    = \int_{\mathcal X} \log \bar \pi^n\, d\bar \pi^n - \KL{\bar \pi^n}{p}
    \le \int_{\mathcal X} \log \bar \pi^n\, d\bar \pi^n,
\end{equation}
with equality if and only if $p=\bar \pi^n$. Hence the maximizer is $p^{n+1}=\bar \pi^n$. Therefore the exact update is
\begin{equation}
    p^{n+1}(x)
    =
    \bar \pi^n(x)
    =
    \mathbb{E}_{m\sim\mu}\bigl[p_m^*(x_m)\,p^n(x_{\bar m}\mid x_m)\bigr],
\end{equation}
which is precisely the claimed Impute-EM update.
\end{proof}

\begin{proof}[Proof of the Theorem \ref{thm:lyapunov-inequality}]
Fix $m\in\mathcal M$ and let $p$ satisfy $\KL{r}{p}<\infty$ for some $r\in\mathcal C$, and set $\pi_m(p)(x):=p(\overline{x}_m\mid x_m)\,p_m^*(x_m)$. By the chain rule for relative entropy,
\begin{equation}\label{eq:KL_equality}
    \KL{r}{p}=\KL{p_m^*}{p_m}+\KL{r}{\pi_m(p)}.
\end{equation}
By convexity of relative entropy in its second argument (the log-sum inequality), for measures $s_m$ with $\KL{r}{s_m}<\infty$,
\begin{equation}\label{eq:convex_ineq}
    \KL{r}{\sum_{m\in\mathcal M}\mu(m)s_m}
    \le
    \sum_{m\in\mathcal M}\mu(m)\KL{r}{s_m}.
\end{equation}
An induction using \eqref{eq:KL_equality} and $\mu(m)>0$ keeps $r\ll p^n$ and $\KL{r}{p^n}<\infty$. Since $p^{n+1}=\sum_{m}\mu(m)\pi_m(p^n)$, applying \eqref{eq:convex_ineq} and then \eqref{eq:KL_equality} gives
\[
\begin{aligned}
    \KL{r}{p^{n+1}}
    &\le \sum_{m\in\mathcal M}\mu(m)\KL{r}{\pi_m(p^n)}\\
    &= \KL{r}{p^n} - \sum_{m\in\mathcal M}\mu(m)\KL{p_m^*}{p^n_m}.
\end{aligned}
\]
Thus
\begin{equation}\label{eq:one_step}
    \KL{r}{p^{n+1}}
    +
    \sum_{m\in\mathcal M}
    \mu(m)\KL{p_m^*}{p^n_m}
    \le\KL{r}{p^n}.
\end{equation}
Summing \eqref{eq:one_step} over $n$ and letting $N\to\infty$,
\[
    \sum_{n=0}^{\infty}
    \sum_{m\in\mathcal M}
    \mu(m)\KL{p_m^*}{p^n_m}
    \le
    \KL{r}{p^0}
    <\infty,
\]
so each nonnegative term satisfies $\KL{p_m^*}{p^n_m}\to 0$ as $n\to\infty$ for every $m\in\mathcal M$.
\end{proof}

\vspace{-2mm}
\section{Additional methodology clarifications}\label{app:add-methods}
\vspace{-2mm}

\subsection{Incomplete likelihood}\label{app:incomplete_likelihood}
\vspace{-1mm}

We use the term \emph{incomplete likelihood} for objectives that train a generative model using only observed coordinates, i.e.,
\[
    \max_\theta \;
    \mathbb{E}_{m\sim\mu,\;x_m\sim p_m^*}
    \left[\log p_{\theta,m}(x_m)\right],
\]
or tractable diffusion/variational surrogates thereof. Under this terminology, MissDiff \cite{ouyang2023missdiff} can be viewed as an incomplete-likelihood diffusion method, since its masked denoising score-matching loss is motivated as an upper bound on the negative observed-data likelihood.  ReMasker \cite{du2024remasker} can also be viewed as a likelihood-based masked autoencoder: it artificially re-masks observed entries and trains the model to predict them from the remaining observed entries. Equivalently, it optimizes a conditional reconstruction likelihood over observed coordinates, rather than a full joint likelihood over complete data.

The ability to train models via incomplete likelihood naturally comes from the our M-step formulation, see Section \ref{sec:methods-practical-impl}.

\subsection{Mixed space diffusion models}
\vspace{-1mm}
\label{app:mixed-diffusion}

Following TabDiff~\cite{tabdiff}, we combine continuous and discrete diffusion components, writing \(x^{\rm num}\) and \(x^{\rm cat}\) for the numerical and categorical coordinates. The mixed conditional diffusion loss is, writing \(\hat p_\theta\) for the conditional categorical prediction \(\hat p_\theta(\cdot \mid x^{\rm cat}_{\bar m,t}, x_m, m)\),
\begin{equation}
  \label{eq:diffusion-loss-mixed}
  \begin{aligned}
  \mathcal{L}_{\rm mixed\,diff}(\theta)
  ={}&
  \mathbb{E}_{\substack{x \sim D^n,\, m \sim \mu\\ t \sim \mathcal{U}[0,T],\, \epsilon \sim \mathcal{N}(0,I)}}
  \left[
    \left\|
    \epsilon -
    \epsilon_\theta\!\left(x^{\rm num}_{\bar m,t}, t \mid x_m, m\right)
    \right\|^2
  \right]
  \\
  &+
  \mathbb{E}_{\substack{x \sim D^n,\, m \sim \mu\\ t \sim \mathcal{U}[0,T]\\
  x^{\rm cat}_{\bar m,t} \sim q_{\rm MDM,t}(\cdot \mid x^{\rm cat}_{\bar m})}}
  \left[
    {\rm CE}\!\left(
      e_{x^{\rm cat}_{\bar m}},\,
      \hat p_\theta
    \right)
  \right],
  \end{aligned}
\end{equation}

with \(x^{\rm num}_{\bar m,t} = \sqrt{\bar\alpha_t}\,x^{\rm num}_{\bar m} + \sqrt{1-\bar\alpha_t}\,\epsilon\). The conditional reverse process factorizes into numerical and categorical transitions:
\begin{equation}
  \label{eq:diffusion-generative-mixed}
  \begin{aligned}
  &p_{\theta}\!\left(x_{\bar m,t-1} \mid x_{\bar m,t}, t, x_m, m\right)
  =
  \mathcal{N}\!\left(
    x_{\bar m,t-1}^{\rm num};
    \mu_{\theta},
    \sigma_t^2 I
  \right)
  \\
  &\quad \cdot
  {\rm Cat}\!\left(
    x_{\bar m,t-1}^{\rm cat}
    \;\middle|\;
    \frac{
      (1-\alpha_{t-1})\,\mathtt{M}
      +
      (\alpha_{t-1}-\alpha_t)\,
      \hat p_{\theta}
    }{
      1-\alpha_t
    }
  \right),
  \end{aligned}
\end{equation}
where \(\mathtt{M}\) is the categorical mask-token distribution (distinct from the family of observation masks \(\mathcal{M}\)).

\vspace{-2mm}
\section{Additional experimental results}\label{app:add-exps}
\vspace{-2mm}

\subsection{Tabular data}\label{app:tabular-add-exps}
\vspace{-1mm}

% \paragraph{Categorical features only analysis.}\label{app:categorical-only}

% This section documents the categorical-only ablation referenced from the main text. For each mixed-type dataset we remove all numerical features and keep only categorical columns, so the model operates purely in discrete state space while the training objective and masking scheme remain those of masked discrete diffusion. Missing values are still injected under MCAR at the same rates as in the main tabular experiments. We compare AmbientMDM to the same baselines and report Overall Density, defined as $(\mathrm{Trend}+\mathrm{Shape})/2$, and MLE with the same conventions as in the main paper, using the same panel layout as Figures~\ref{fig:tabular-density-overall} and~\ref{fig:tabular-mle}.

\paragraph{Runtime analysis}

Table~\ref{tab:runtime_adult_50} compares training runtimes for the tabular methods in Section \ref{sec:experiments-tabular}. Our method is slower than the incomplete-likelihood baselines MissDiff \cite{ouyang2023missdiff} and Remasker \cite{du2024remasker} but only by a modest margin, and $3$ to $4$ times faster than Diffputer \cite{diffputer}, whose E-step averages 10 imputations and pays a corresponding inference cost.

\begin{table}[h]
\centering
\small
\caption{Runtime comparison for training the imputation methods on \textit{default} dataset with 50\% missingness rate. All methods except HyperImpute were trained on an NVIDIA A100 GPU. HyperImpute utilizes only the CPU.}
\label{tab:runtime_adult_50}
\begin{tabular}{lc}
\toprule
\textbf{Method} & \textbf{Runtime} \\
\midrule
Remasker & $1$ hour \\
MissDiff & $20$ minutes \\
Diffputer & $7$ hours $18$ minutes \\
HyperImpute & \textbf{22 minutes} \\
TabDiff & $53$ minutes \\
TabDiff  \ourname\ (ours) & $2$ hours $41$ minutes \\
\bottomrule
\end{tabular}
\vspace{-4mm}
\end{table}

\paragraph{\ourname\ iterations analysis}\label{app:tabular-iterations}

Figure~\ref{fig:tabular-iter-dynamics} shows ML-efficiency, Shape, and Trend versus \ourname\ iterations on the Default dataset with $50\%$ missingness. All metrics improve with iterations, with most of the gain in the first few steps and convergence by step 5.

\begin{figure}[t]
  \centering
  \includegraphics[width=\linewidth]{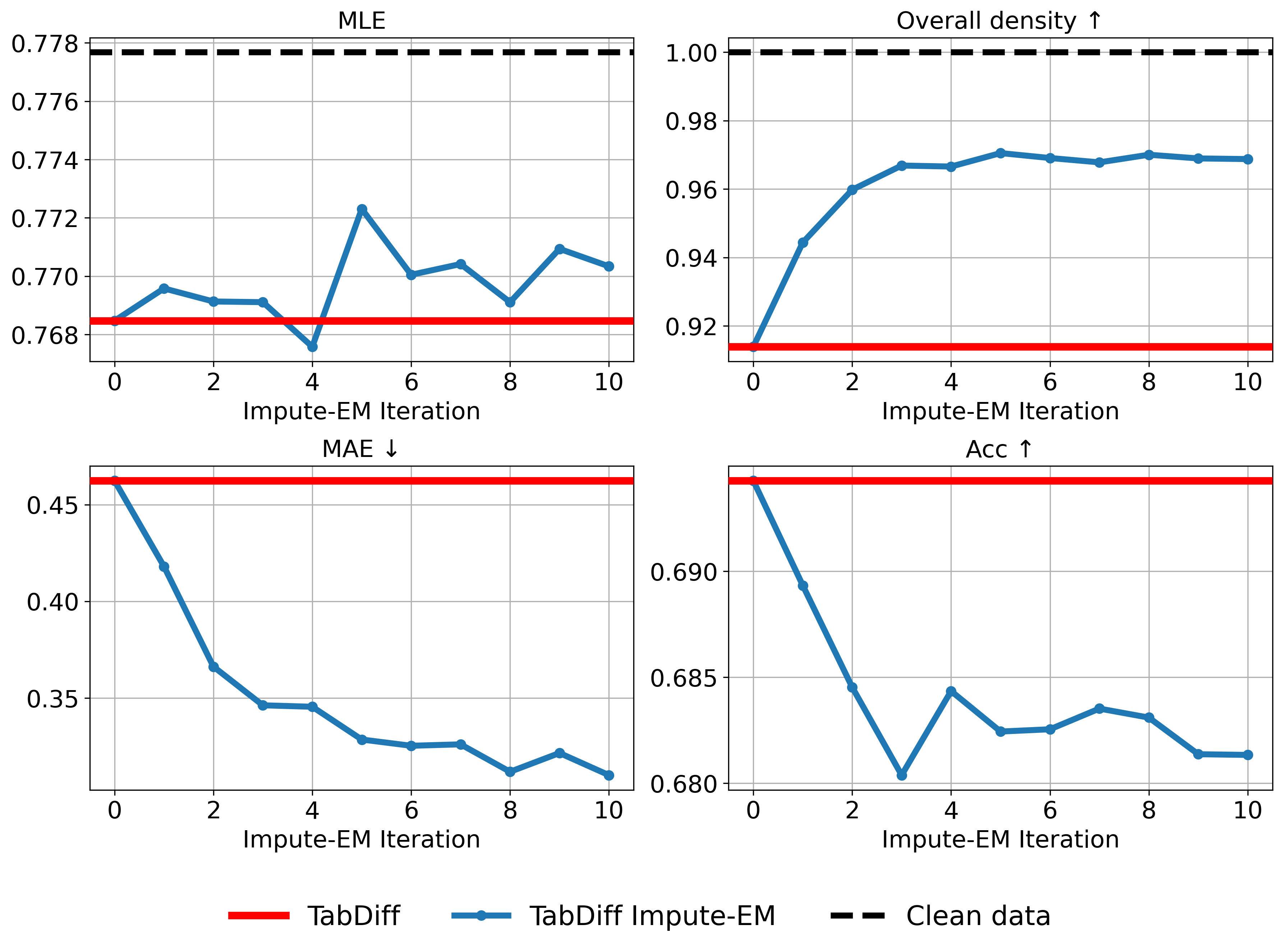}%
  \caption{Metrics versus Impute-EM iteration on the tabular data, i.e., "default" dataset with $50\%$ missingness.}
  \label{fig:tabular-iter-dynamics}
\vspace{-4mm}
\end{figure}

\paragraph{Discriminative Metrics}\label{app:discriminative-metrics}

Our objective is distributional recovery rather than pointwise reconstruction, so discriminative metrics are less relevant for our method. We nevertheless report them on imputed features under the same MCAR setup as in Section~\ref{sec:experiments-tabular}. \textbf{Accuracy} measures how well imputed categorical entries match the ground truth, and \textbf{MAE} measures mean absolute error on numerical features. Figures~\ref{fig:tabular-appendix-acc} and~\ref{fig:tabular-appendix-mae} show both metrics versus mask rate for Adult, Shoppers, Default, and Beijing. While TabDiff \ourname\ achieves the best distributional metrics and the best average downstream ML-efficiency rank, it does not lead on discriminative metrics, as the compared baselines are designed for pointwise recovery \cite{ouyang2023missdiff, du2024remasker, diffputer}. Figure~\ref{fig:tabular-mnar-discriminative} reports the same metrics under MNAR at $p=0.7$, with the same conclusion.

\begin{figure}[t]
  \centering
  \includegraphics[width=\linewidth]{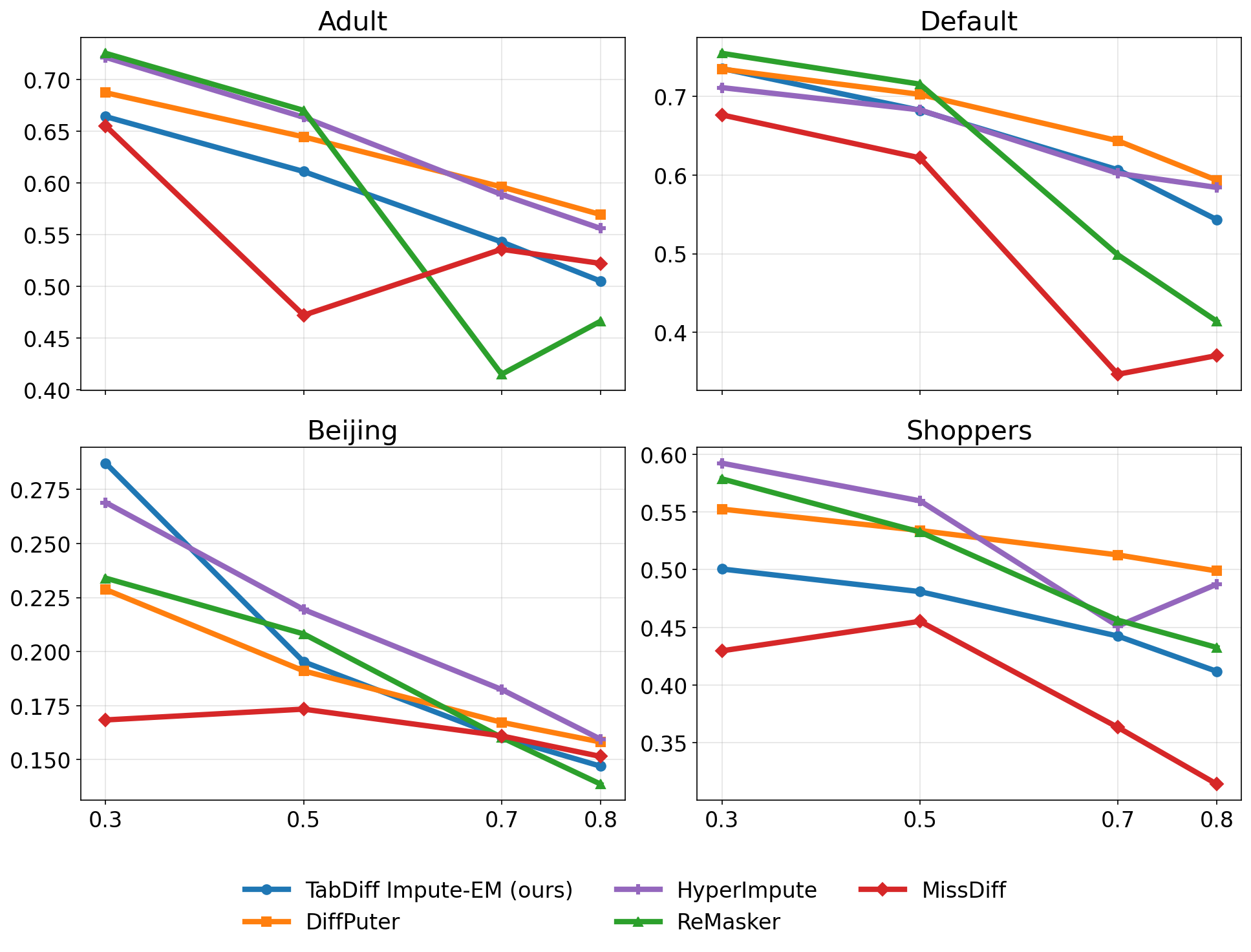}%
  \caption{Categorical accuracy $\uparrow$ under MCAR versus mask rate (Adult, Shoppers, Default, and Beijing).}
  \label{fig:tabular-appendix-acc}
\vspace{-4mm}
\end{figure}

\begin{figure}[t]
  \centering
  \includegraphics[width=\linewidth]{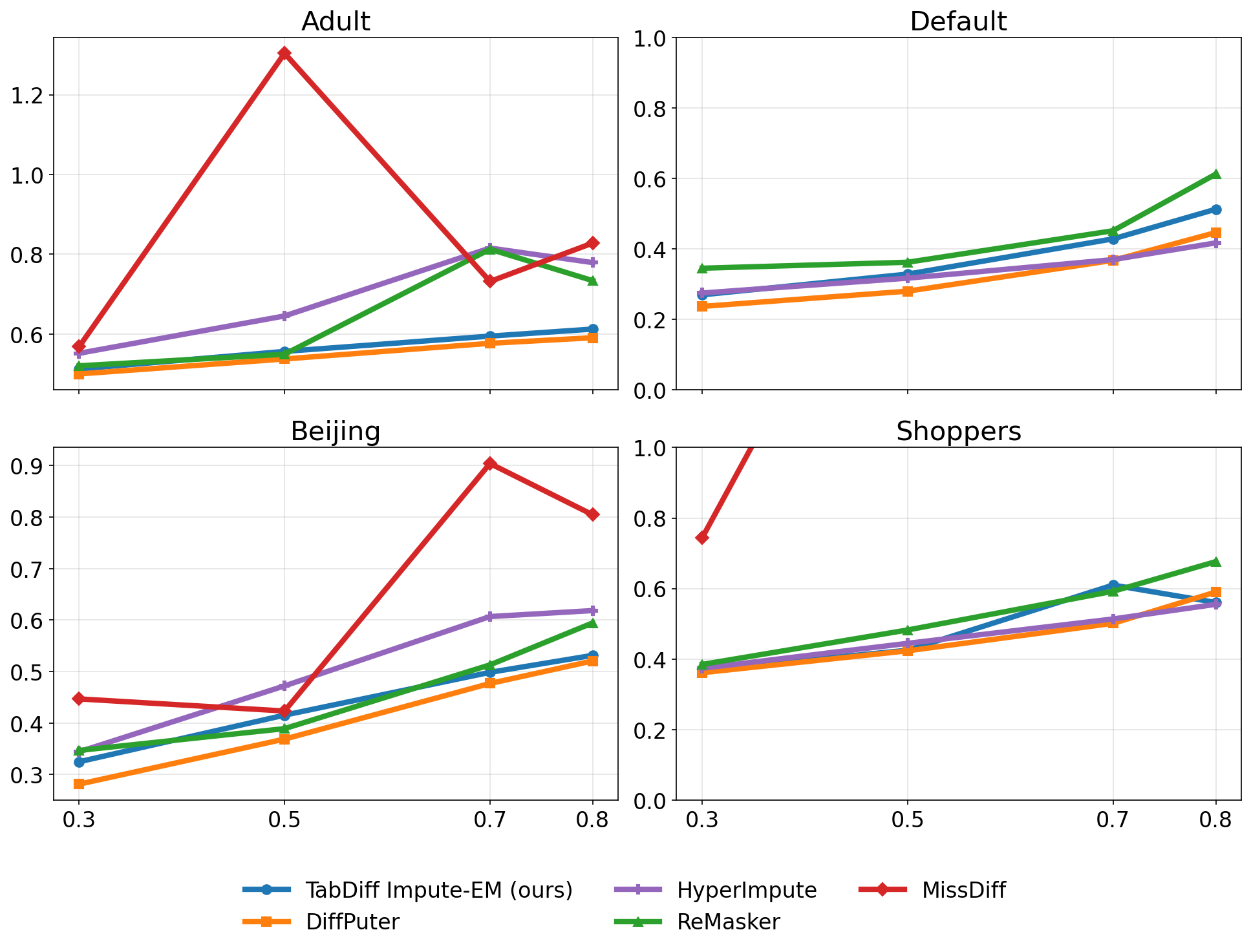}%
  \caption{Mean absolute error (MAE) $\downarrow$ on numerical features under MCAR versus mask rate (Adult, Shoppers, Default, and Beijing).}
  \label{fig:tabular-appendix-mae}
\vspace{-4mm}
\end{figure}

\begin{figure}[t]
  \centering
  \includegraphics[width=\linewidth]{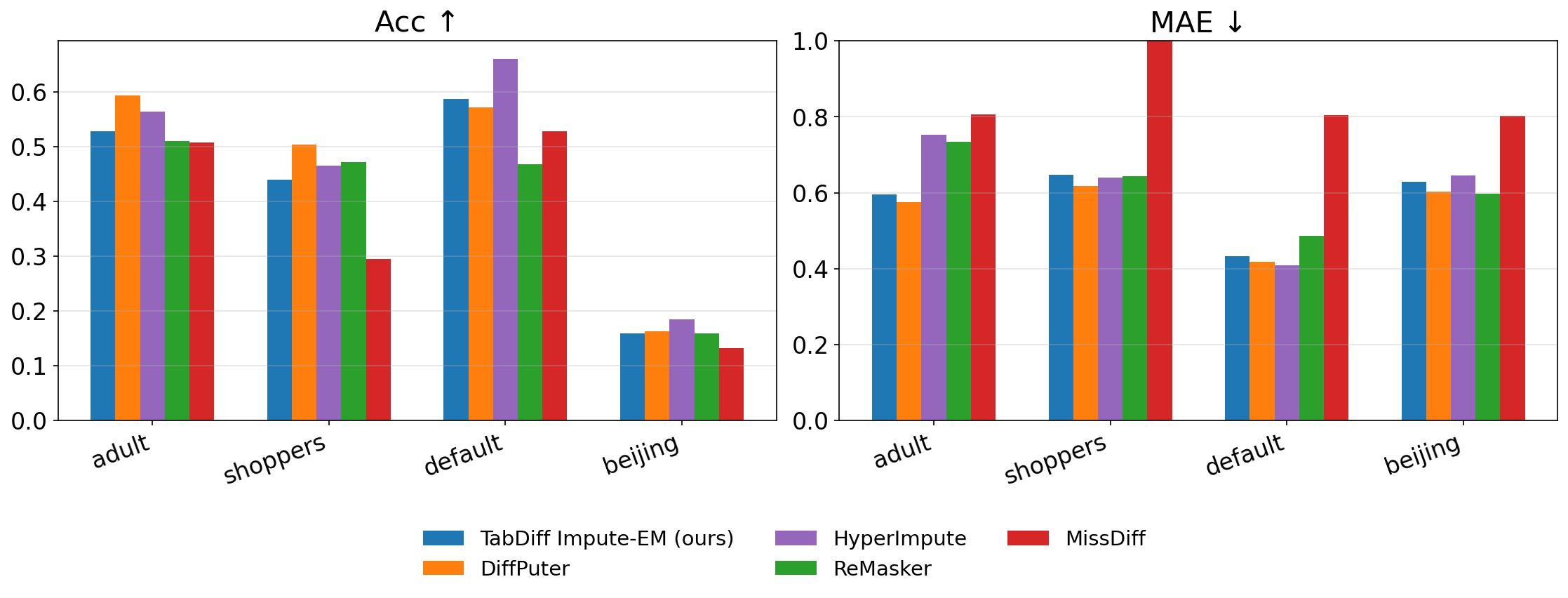}%
  \caption{Discriminative metrics under MNAR at mask rate $p=0.7$. Categorical accuracy $\uparrow$ and MAE $\downarrow$ across the four datasets.}
  \label{fig:tabular-mnar-discriminative}
\vspace{-4mm}
\end{figure}

\subsection{MAR robustness}\label{app:mar-results}
\vspace{-1mm}

We additionally evaluate MAR (Missing At Random) robustness with the DiffPuter \cite{diffputer} $70\%$ masking protocol, with Adult, Shoppers, Default, and Beijing datasets. All the methods are run once on the same test-set evaluation. TabDiff and DiffPuter use five EM iterations, while ReMasker \cite{du2024remasker} and MissDiff \cite{ouyang2023missdiff} are single-pass baselines. Figure~\ref{fig:tabular-mar-density-mle} reports distributional fidelity and downstream utility. Figure~\ref{fig:tabular-mar-discriminative} reports pointwise metrics. TabDiff \ourname\ has the highest Overall Density score on every dataset and deliveres less MLE error than the other methods overall.

\begin{figure}[t]
  \centering
  \includegraphics[width=0.9\linewidth]{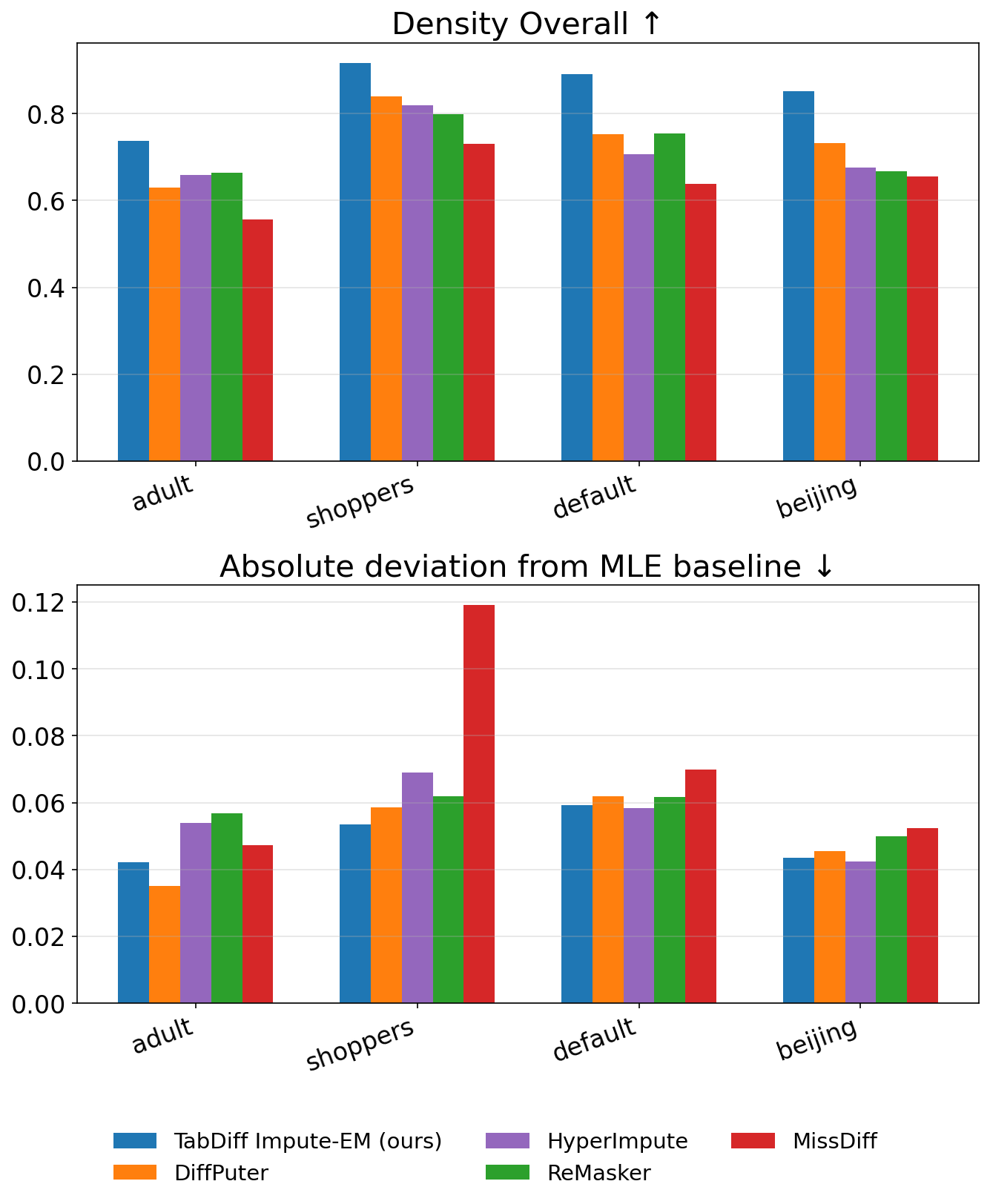}%
  \caption{Single-run MAR results under the DiffPuter $70\%$ mask, split $0$. Density Overall $\uparrow$ and absolute deviation from the clean-data MLE baseline $\downarrow$ across the four datasets. Beijing MLE deviation is multiplied by $0.1$ to keep the scale comparable across datasets.}
  \label{fig:tabular-mar-density-mle}
\vspace{-4mm}
\end{figure}

\begin{figure}[t]
  \centering
  \includegraphics[width=\linewidth]{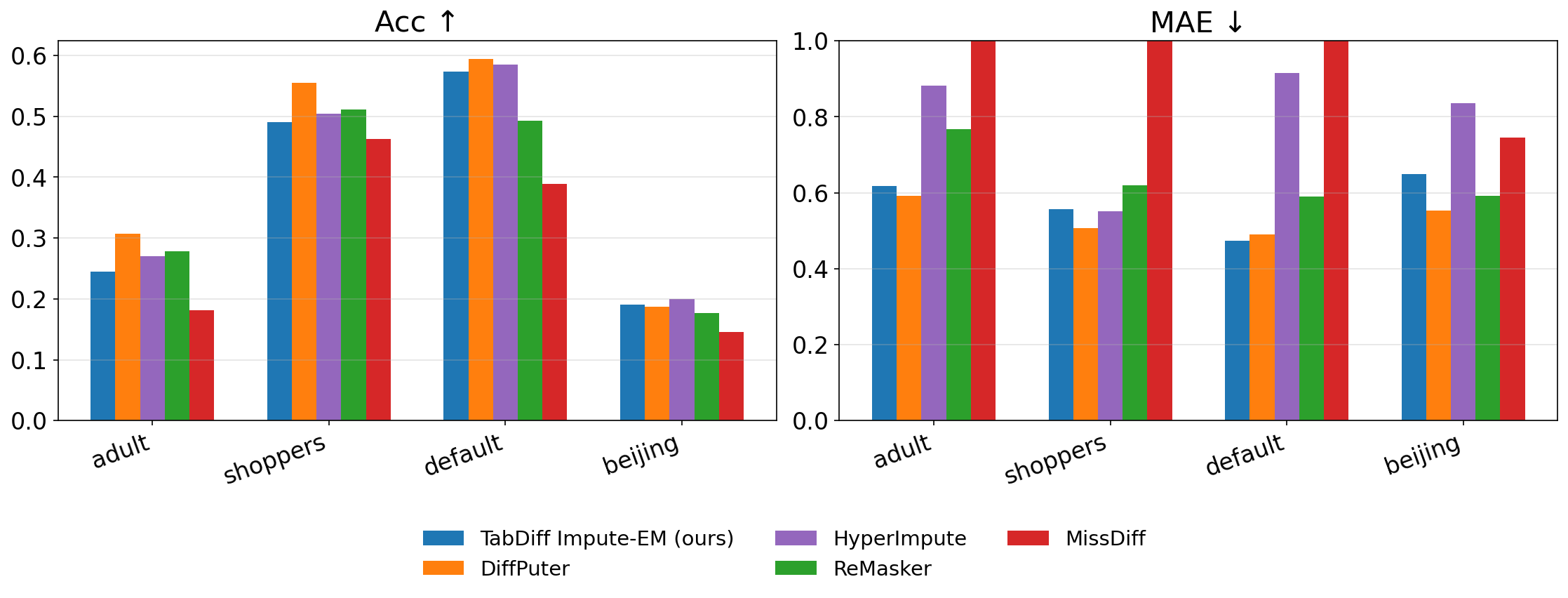}%
  \caption{Single-run MAR discriminative results under the DiffPuter $70\%$ mask, split $0$. Categorical accuracy $\uparrow$ and MAE $\downarrow$ across the four datasets. The MAE axis is capped at $1.0$.}
  \label{fig:tabular-mar-discriminative}
\vspace{-4mm}
\end{figure}

\vspace{-2mm}
\section{Experimental details}\label{app:exp-details}
\vspace{-2mm}

\subsection{Text data}\label{app:details-text}
\vspace{-1mm}

\paragraph{Metrics}\label{app:text-metrics} Perplexity (PPL) is the exponential of the average negative log-likelihood, $\mathrm{PPL} = \exp\!\left(-\mathbb{E}_{x\sim p_{\mathrm{data}}}\!\left[\tfrac{1}{N}\log p_\theta(x_1,\ldots,x_N)\right]\right)$, where the log-likelihood is computed via the MDLM ELBO \cite{sahoo2024mdlm}.

\paragraph{Implementation details} We utilize the MDLM \cite{sahoo2024mdlm} official code base:

\vspace{-2mm}
\begin{center}
\url{https://github.com/kuleshov-group/mdlm}
\end{center}
\vspace{-2mm}

The following hyperparameters are consistent across all experiments. During E-step dataset generation for MDM backward process inference, we use $64$ function evaluations (NFE). As the backbone architecture, we use a DIT \cite{peebles2023scalable} with approximately $13$M parameters. EMA of weights was used with $0.999$ coefficient. The training took $10$ hours for ReMasker MDLM and $17$ hours for MDLM \ourname\ (including the initialization stage) on two NVIDIA-A100 GPUs.

\subsection{Tabular data}\label{app:details-tabular}
\vspace{-1mm}

\paragraph{Missingness patterns}
Missing Completely At Random (MCAR) means missingness is independent of all entries. It is the standard tabular imputation benchmark for its simplicity and reproducibility. For our Missing Not At Random (MNAR) robustness study at $p=0.7$, we follow the MNAR protocol of DiffPuter \cite{diffputer}: the columns are split into two groups, a logistic model on the first group outputs the missing probabilities for the second group, and MCAR is then applied to the first group. As a result, missingness in the second group depends on the masked values of the first. For more details we refer the reader to DiffPuter \cite{diffputer}.

\paragraph{Distributional metrics (Shape and Trend)}\label{app:shape}\label{app:trend}
Following TabDiff \cite{tabdiff}, we use the SDMetrics\footnote{\url{https://docs.sdv.dev/sdmetrics}} Shape and Trend scores. \emph{Shape} measures how well each column marginal matches the reference data, using the Kolmogorov-Smirnov statistic for numerical columns and total variation distance for categorical columns. \emph{Trend} measures how well pairwise dependence matches, with Pearson correlations for numerical pairs and contingency-table frequencies for categorical pairs. \emph{Overall Density} is their average $(\mathrm{Shape}+\mathrm{Trend})/2$. Lower error is better.

\paragraph{Machine learning efficiency}\label{app:mle-metric}
Machine Learning Efficiency (ML-efficiency) evaluates whether imputed tables remain useful for the dataset's supervised task \cite{tabdiff}. We train an XGBoost model on the imputed training split (hyperparameters chosen on an $8{:}1$ train-validation split) and evaluate on a held-out clean test set, using AUC-ROC for classification and RMSE for regression. Only training features are imputed, with scores averaged over $50$ runs.

% In the generative setting of TabDiff \cite{tabdiff}, one compares metrics when training on real data versus on synthetic samples of the same size. Here we compare training on clean complete data versus training on MCAR-imputed data at each missing rate. Following the figures and tables in the main text, we report the absolute deviation of the imputed-data test score from the score obtained when training on the clean data (AUC for classification, RMSE for regression), so lower deviation is better.

\paragraph{Data description}\label{app:data-description}

We use four mixed-type tabular benchmarks from the UCI Machine Learning Repository.\footnote{\url{https://archive.ics.uci.edu/}} Each dataset is tied to a downstream supervised task used for ML-efficiency. Adult, Default, and Shoppers are classification problems. Beijing is a regression problem. In the Table~\ref{tab:app-datasets} one can see the information about datset sizes and train test splits. All preprocessing follows DiffPuter~\cite{diffputer}, whose codebase also releases the preprocessed splits.\footnote{\url{https://github.com/hengruizhang98/DiffPuter}}

\begin{table}[t]
  \centering
  \footnotesize
  \caption{Dataset statistics. \# Num and \# Cat count numerical and categorical columns and task target columns are counted separately.}
  \label{tab:app-datasets}
  \begin{tabular}{@{}lrrrrrl@{}}
    \toprule
    Dataset & \# Rows & \# Num & \# Cat & \# Train & \# Test & Task \\
    \midrule
    Adult     & 32{,}561 & 6  & 7  & 22{,}792 & 9{,}769  & Classification \\
    Default   & 30{,}000 & 14 & 9 & 21{,}000 & 9{,}000  & Classification \\
    Shoppers  & 12{,}330 & 10 & 6  & 8{,}631  & 3{,}699  & Classification \\
    Beijing   & 41{,}828 & 6  & 5  & 29{,}320 & 12{,}528  & Regression \\
    \bottomrule
  \end{tabular}
\vspace{-4mm}
\end{table}

\paragraph{Implementation details}\label{app:tabular-data-details}

Baseline methods, i.e., MissDiff, Remasker, Diffputer,  were taken from Diffputer \cite{diffputer} official repository:

\vspace{-2mm}
\begin{center}
\url{https://github.com/hengruizhang98/DiffPuter}
\end{center}
\vspace{-2mm}

All the baseline methods hyperparameters were taken from Diffuputer \cite{diffputer}. The MissDiff and Remasker experiments took no longer than hour, Diffputer experiments took no longer than $12$ hours. Models were trained on NVIDIA-A100 GPU.

As the baseline implementation for TabDiff \ourname\ we used the official TabDiff \cite{tabdiff} github repository:

\vspace{-2mm}
\begin{center}
\url{https://github.com/minkaixu/tabdiff}
\end{center}
\vspace{-2mm}

As the backbone neural network the same multimodal MLP as in TabDiff was used. At the initialization stage the incomplete likelihood model was trained for $5000$ epochs, while during each of the \ourname\ iterations the model was trained for $2000$ epochs. During the E-step the samples were generated by TabDiff with $50$ NFE. The model was trained with Adam optimizer, starting learning rate is $3e-4$ with ReduceOnPlateou scheduler. EMA of weights was used with $0.997$ coefficient. Each experiment took no longer than $12$ hours on NVIDIA-A100 GPU.

%% file: references.bib
@inproceedings{du2024remasker,
  title={Remasker: Imputing tabular data with masked autoencoding},
  author={Du, Tianyu and Melis, Luca and Wang, Ting},
  booktitle={International Conference on Learning Representations},
  volume={2024},
  pages={9002--9024},
  year={2024}
}

@article{dror2025token,
  title={Token-based Audio Inpainting via Discrete Diffusion},
  author={Dror, Tali and Shoham, Iftach and Buchris, Moshe and Gal, Oren and Permuter, Haim and Katz, Gilad and Nachmani, Eliya},
  journal={arXiv preprint arXiv:2507.08333},
  year={2025}
}

@inproceedings{you2020handling,
 author = {You, Jiaxuan and Ma, Xiaobai and Ding, Yi and Kochenderfer, Mykel J and Leskovec, Jure},
 booktitle = {Advances in Neural Information Processing Systems},
 editor = {H. Larochelle and M. Ranzato and R. Hadsell and M.F. Balcan and H. Lin},
 pages = {19075--19087},
 publisher = {Curran Associates, Inc.},
 title = {Handling Missing Data with Graph Representation Learning},
 url = {https://proceedings.neurips.cc/paper_files/paper/2020/file/dc36f18a9a0a776671d4879cae69b551-Paper.pdf},
 volume = {33},
 year = {2020}
}

@inproceedings{pujianto2019knn,
 author={Murti, Della Murbarani Prawidya and Pujianto, Utomo and Wibawa, Aji Prasetya and Akbar, Muhammad Iqbal},
  booktitle={2019 5th International Conference on Science in Information Technology (ICSITech)}, 
  title={K-Nearest Neighbor (K-NN) based Missing Data Imputation}, 
  year={2019},
  volume={},
  number={},
  pages={83-88},
  doi={10.1109/ICSITech46713.2019.8987530}}

@article{garcia2010pattern,
  title={Pattern classification with missing data: a review},
  author={Garc{\'\i}a-Laencina, Pedro J and Sancho-G{\'o}mez, Jos{\'e}-Luis and Figueiras-Vidal, An{\'\i}bal R},
  journal={Neural Computing and Applications},
  volume={19},
  number={2},
  pages={263--282},
  year={2010},
  publisher={Springer}
}

@inproceedings{zhong2023igrm,
  title={Data imputation with iterative graph reconstruction},
  author={Zhong, Jiajun and Gui, Ning and Ye, Weiwei},
  booktitle={Proceedings of the AAAI conference on artificial intelligence},
  volume={37},
  number={9},
  pages={11399--11407},
  year={2023}
}

@inproceedings{kim2026augmask,
  title={AugMask: Training Diffusion Models on Incomplete Tabular Data via Stochastic Augmentation and Masking},
  author={Kim, Jungkyu and Park, Taeyoung and Lee, Kibok},
  booktitle={Proceedings of the 43rd International Conference on Machine Learning},
  series={Proceedings of Machine Learning Research},
  volume={306},
  year={2026},
  publisher={PMLR},
  url={https://arxiv.org/abs/2606.03347}
}

@InProceedings{yoon2018gain,
  title = 	 {{GAIN}: Missing Data Imputation using Generative Adversarial Nets},
  author =       {Yoon, Jinsung and Jordon, James and van der Schaar, Mihaela},
  booktitle = 	 {Proceedings of the 35th International Conference on Machine Learning},
  pages = 	 {5689--5698},
  year = 	 {2018},
  editor = 	 {Dy, Jennifer and Krause, Andreas},
  volume = 	 {80},
  series = 	 {Proceedings of Machine Learning Research},
  month = 	 {10--15 Jul},
  publisher =    {PMLR},
  url = 	 {https://proceedings.mlr.press/v80/yoon18a.html}
}

@inproceedings{mattei2019miwae,
  title={MIWAE: Deep generative modelling and imputation of incomplete data sets},
  author={Mattei, Pierre-Alexandre and Frellsen, Jes},
  booktitle={International conference on machine learning},
  pages={4413--4423},
  year={2019},
  organization={PMLR}
}

@inproceedings{richardson2020mcflow,
  title={Mcflow: Monte carlo flow models for data imputation},
  author={Richardson, Trevor W and Wu, Wencheng and Lin, Lei and Xu, Beilei and Bernal, Edgar A},
  booktitle={Proceedings of the IEEE/CVF conference on computer vision and pattern recognition},
  pages={14205--14214},
  year={2020}
}

@article{ouyang2023missdiff,
  title={Missdiff: Training diffusion models on tabular data with missing values},
  author={Ouyang, Yidong and Xie, Liyan and Li, Chongxuan and Cheng, Guang},
  journal={arXiv preprint arXiv:2307.00467},
  year={2023}
}

@inproceedings{zheng2022tabcsdi,
  title={Diffusion models for missing value imputation in tabular data},
  author={Zheng, Shuhan and Charoenphakdee, Nontawat},
  booktitle={NeurIPS 2022 First Table Representation Workshop}
}

@article{ho2020denoising,
  title={Denoising diffusion probabilistic models},
  author={Ho, Jonathan and Jain, Ajay and Abbeel, Pieter},
  journal={Advances in neural information processing systems},
  volume={33},
  pages={6840--6851},
  year={2020}
}

@inproceedings{diffputer,
  title={Diffputer: Empowering diffusion models for missing data imputation},
  author={Zhang, Hengrui and Fang, Liancheng and Wu, Qitian and Yu, Philip},
  booktitle={International Conference on Learning Representations},
  volume={2025},
  pages={63164--63185},
  year={2025}
}

@article{dempster1977em,
  title={Maximum likelihood from incomplete data via the EM algorithm},
  author={Dempster, Arthur P and Laird, Nan M and Rubin, Donald B},
  journal={Journal of the royal statistical society: series B (methodological)},
  volume={39},
  number={1},
  pages={1--22},
  year={1977},
  publisher={Wiley Online Library}
}

@article{yu2025missing_miri,
  title={Missing data imputation by reducing mutual information with rectified flows},
  author={Yu, Jiahao and Ying, Qizhen and Wang, Leyang and Jiang, Ziyue and Liu, Song},
  journal={Advances in Neural Information Processing Systems},
  volume={38},
  pages={80324--80352},
  year={2025}
}

@article{sahoo2024mdlm,
  title={Simple and effective masked diffusion language models},
  author={Sahoo, Subham S and Arriola, Marianne and Schiff, Yair and Gokaslan, Aaron and Marroquin, Edgar and Chiu, Justin T and Rush, Alexander and Kuleshov, Volodymyr},
  journal={Advances in Neural Information Processing Systems},
  volume={37},
  pages={130136--130184},
  year={2024}
}

@inproceedings{
song2021sde,
title={Score-Based Generative Modeling through Stochastic Differential Equations},
author={Yang Song and Jascha Sohl-Dickstein and Diederik P Kingma and Abhishek Kumar and Stefano Ermon and Ben Poole},
booktitle={International Conference on Learning Representations},
year={2021},
url={https://openreview.net/forum?id=PxTIG12RRHS}
}

@article{austin2021structured,
  title={Structured denoising diffusion models in discrete state-spaces},
  author={Austin, Jacob and Johnson, Daniel D and Ho, Jonathan and Tarlow, Daniel and Van Den Berg, Rianne},
  journal={Advances in neural information processing systems},
  volume={34},
  pages={17981--17993},
  year={2021}
}

@inproceedings{tabdiff,
  title={Tabdiff: a mixed-type diffusion model for tabular data generation},
  author={Shi, Juntong and Xu, Minkai and Hua, Harper and Zhang, Hengrui and Ermon, Stefano and Leskovec, Jure},
  booktitle={International Conference on Learning Representations},
  volume={2025},
  pages={37353--37375},
  year={2025}
}

@article{hosseintabar2025diffem,
  title={Diffem: Learning from corrupted data with diffusion models via expectation maximization},
  author={Hosseintabar, Danial and Chen, Fan and Daras, Giannis and Torralba, Antonio and Daskalakis, Constantinos},
  journal={arXiv preprint arXiv:2510.12691},
  year={2025}
}

@inproceedings{hyperimpute,
  title={Hyperimpute: Generalized iterative imputation with automatic model selection},
  author={Jarrett, Daniel and Cebere, Bogdan C and Liu, Tennison and Curth, Alicia and van der Schaar, Mihaela},
  booktitle={International Conference on Machine Learning},
  pages={9916--9937},
  year={2022},
  organization={PMLR}
}

@inproceedings{fortuin2020gpvae,
  title={Gp-vae: Deep probabilistic time series imputation},
  author={Fortuin, Vincent and Baranchuk, Dmitry and R{\"a}tsch, Gunnar and Mandt, Stephan},
  booktitle={International conference on artificial intelligence and statistics},
  pages={1651--1661},
  year={2020},
  organization={PMLR}
}

@inproceedings{ou2025absorbing,
  title={Your absorbing discrete diffusion secretly models the conditional distributions of clean data},
  author={Ou, Jingyang and Nie, Shen and Xue, Kaiwen and Zhu, Fengqi and Sun, Jiacheng and Li, Zhenguo and Li, Chongxuan},
  booktitle={International Conference on Learning Representations},
  volume={2025},
  pages={64972--65009},
  year={2025}
}

@misc{mahoney2006text8,
  title={Large text compression benchmark},
  author={Mahoney, Matt},
  year={2011}
}

@inproceedings{peebles2023scalable,
  title={Scalable diffusion models with transformers},
  author={Peebles, William and Xie, Saining},
  booktitle={Proceedings of the IEEE/CVF international conference on computer vision},
  pages={4195--4205},
  year={2023}
}
